\newcommand{\bx}{\mathbf{x}}
\newcommand{\btheta}{\boldsymbol{\theta}}
\newcommand{\bz}{\mathbf{z}}
\newcommand{\br}{\mathbf{r}}
\newcommand{\bA}{\mathbf{A}}
\newcommand{\R}{\mathbb{R}}
\newcommand{\N}{\mathbb{N}}
\newcommand{\calS}{\mathcal{S}}
\newcommand{\bin}{\mathcal{B}}
\newcommand{\Mod}[1]{\ \mathrm{mod}\ #1}
\newcommand{\ourmethod}{\texttt{BODi}}
\newcommand{\ourembedding}{\texttt{HED}}
\newcommand{\casmo}{\texttt{CASMOPOLITAN}}
\newcommand{\combo}{\texttt{COMBO}}
\newcommand{\cocabo}{\texttt{CoCaBO}}
\newcommand{\smac}{\texttt{SMAC}}
\newcommand{\dict}{{\bold A}}
\newcommand{\atom}{{\bold a}} 
\newcommand{\hamm}{{\boldsymbol \phi}}
\theoremstyle{plain}
\newtheorem{thm}{Theorem}
\newtheorem{prop}[thm]{Proposition}
\newtheorem{appthm}{Theorem} 
\newtheorem{appprop}[appthm]{Proposition} 
\begin{document}

\runningtitle{Bayesian Optimization over High-Dimensional Combinatorial Spaces via Dictionary-based Embeddings}
\runningauthor{Aryan Deshwal, Sebastian Ament, Maximilian Balandat, Eytan Bakshy, Janardhan Rao Doppa, David Eriksson}

\twocolumn[
    \aistatstitle{Bayesian Optimization over High-Dimensional Combinatorial Spaces \\ via Dictionary-based Embeddings}
    \aistatsauthor{Aryan Deshwal \And Sebastian Ament \And Maximilian Balandat \And Eytan Bakshy }
    \aistatsaddress{Washington State University \And  Meta \And Meta \And Meta}
    \aistatsauthor{Janardhan Rao Doppa \And David Eriksson}
    \aistatsaddress{Washington State University \And Meta}
]

\begin{abstract}
We consider the problem of optimizing expensive black-box functions over high-dimensional combinatorial spaces which arises in many science, engineering, and ML applications.
We use Bayesian Optimization (BO) and propose a novel surrogate modeling approach for efficiently handling a large number of binary and categorical parameters.
The key idea is to select a number of discrete structures from the input space (the dictionary) and use them to define an ordinal embedding for high-dimensional combinatorial structures.
This allows us to use existing Gaussian process models for continuous spaces.
We develop a principled approach based on binary wavelets to construct dictionaries for binary spaces,
and propose a randomized construction method that generalizes to categorical spaces.
We provide theoretical justification to support the effectiveness of the dictionary-based embeddings.
Our experiments on diverse real-world benchmarks demonstrate the effectiveness of our proposed surrogate modeling approach over state-of-the-art BO methods.
\end{abstract}

\section{INTRODUCTION}
Many real-world applications require building probabilistic models over high-dimensional discrete and mixed (involving both discrete and continuous parameters) input spaces using limited training data.
These models need to make accurate predictions and quantify the uncertainty for unknown inputs.
Some examples include calibration of environment models, feature selection for automated machine learning (AutoML) where the inclusion/exclusion of a given feature can be represented by a binary parameter, and microbiome analysis where the inclusion/exclusion of a microbial species is a binary parameter and environmental variables correspond to continuous parameters.

Gaussian processes (GPs)~\citep{Rasmussen2004} are well-suited for this setting.
GPs are also commonly used as surrogate models for sample-efficient optimization of expensive black-box functions over both continuous and discrete/mixed spaces~\citep{frazier2018tutorial}.
For instance, in microbiome design optimization we need to perform expensive wet lab experiments to evaluate each mixed configuration in the form of a subset of candidate microbes and environmental conditions~\citep{clark2021design}.
Other example applications include feature selection for ML models \cite{guyon2003introduction}, tuning flags of a compiler to optimize efficiency \cite{hellsten2022baco}, and tuning database configurations \cite{zhang2021facilitating}.
The key challenge in using GPs for combinatorial spaces is to define an appropriate kernel to capture the similarity between input pairs.

This paper proposes a novel Hamming embedding via dictionaries (\ourembedding{}).
This embedding allows us to leverage popular GP kernels with automatic relevance determination (ARD) for modeling high-dimensional combinatorial inputs.
Our method naturally extends to mixed inputs with both continuous and discrete variables by using a product kernel.
The key idea in our modeling approach is to select a fixed number of candidate structures from the input space, referred to as a {\em dictionary}, and to define an {\em embedding} for the input space using the Hamming distance of the inputs to elements in the dictionary.
The effectiveness of this approach critically depends on the choice of the dictionary.
Our theoretical analysis shows that the regret bound for GP bandits \citep{ucb} trained on the \ourembedding{} is a function of the cardinality of the embedded search space,
which in turn is a function of a notion of orthogonality of the dictionary.
We observe that constructing dictionaries that initially limit the collapse of the search space cardinality exhibit a high degree of modeling flexibility,
which leads to a data-driven compression of the search space and empirically faster convergence.
Motivated by these theoretical insights, we propose two methods to construct dictionaries:
1) sub-sampled binary wavelets \citep{swanson1996binary}, which optimize the orthogonality measure in power-of-two dimensions, and
2) a randomized method that generalizes to categorical inputs and allows us to design dictionaries of any size.

To evaluate the effectiveness of dictionary-based embeddings, we consider several expensive black-box optimization problems within the framework of Bayesian optimization (BO).
Applying BO to combinatorial spaces comes with unique challenges \citep{IJCAI-2021} since commonly used surrogate models often do not work well in this setting and because we cannot rely on gradient-based methods to optimize the utility function.
Examples of surrogate models that have been applied in combinatorial spaces include GPs with diffusion kernels~\citep{oh2019combinatorial}, GPs with isotropic kernels~\citep{wan2021think}, linear models~\citep{baptista2018bayesian}, and random forests~\citep{smac,l2s_disco}. When the dictionary-based embeddings are used for Bayesian optimization, we refer to this as \textbf{BO} with \textbf{Di}ctionaries (\ourmethod{}).
Our comprehensive experimental evaluation on BO benchmarks demonstrate the efficacy of \ourmethod{} over state-of-the-art methods and provide empirical evidence that \ourmethod{}'s strong performance is due to dictionary-based surrogate model.

The key contribution of this paper is the development and evaluation of our dictionary-based modeling approach.
Our specific contributions include:
\begin{enumerate}
\setlength\itemsep{0em}
    \item A dictionary-based embedding that substantially improves the quality of GP models in high-dimensional combinatorial and mixed input spaces.
    \item Two methods of constructing the dictionary: 1) via binary wavelets, and 2) a randomized construction method that generalizes to arbitrary dimensions and categorical variables.
    \item A theoretical analysis of our approach shows that it compresses the cardinality of the input space under certain conditions, and if ARD is used, in a data-dependent fashion.
    \item The compressed cardinality leads to improved regret bounds for GP Bandits with binary inputs.
    \item A comprehensive experimental evaluation on diverse set of combinatorial and mixed BO benchmarks demonstrate the effectiveness of~\ourmethod{}. The source code is available at \url{https://github.com/aryandeshwal/BODi}.
\end{enumerate}

\section{BACKGROUND}

\paragraph{Combinatorial and mixed spaces.}
Let $\mathcal{Z}$ be a combinatorial space where each element $\bold z \in \mathcal{Z}$ is a discrete structure.
We assume $\bold z \in \mathcal{Z}$ can be represented using $d$ discrete variables $v_1, v_2, \cdots, v_d$ where each variable $v_i$ takes values from a finite candidate set $C(v_i)$.
Each variable $v_i$ takes $\tau_i \geq 2$ possible values and the cardinality of the space is $|\mathcal{Z}| = \prod_{i=1}^d \tau_i$.
In particular, for binary spaces, $C(v_i) = \{0, \, 1\}$ for all $v_i$ and $|\mathcal{Z}| = 2^d$.
If $\mathcal{X}$ is a space of continuous parameters, we call $\mathcal{X} \times \mathcal{Z}$ a \emph{mixed space}.

\paragraph{Problem definition.}
We are given a {\em high-dimensional} combinatorial space $\mathcal{Z}$, i.e., the number of discrete variables $d$ is large.
We assume we are optimizing a black-box objective function $f: \mathcal{Z} \mapsto \R$, which we can evaluate on each structure $\bold z \in \mathcal{Z}$.
For example, in feature selection for Auto ML tasks, $\bold z$ is a binary structure corresponding to a subset of features and $f(\bold z)$ is the performance of a trained ML model using the selected features.
Our goal is to find a structure $\bold z \in \mathcal{Z}$ that approximately optimizes $f$ given a small number of function evaluations.

\paragraph{Bayesian optimization.}
BO methods build a probabilistic surrogate model $\mathcal{M}$, often a GP, from the training data of past function evaluations and intelligently select the sequence of inputs for evaluation in a sample-efficient manner.
The selection of inputs is performed by maximizing an \emph{acquisition function} $\alpha$ that operates on the posterior distribution provided by the surrogate model.
One of the key challenges in using BO for high-dimensional combinatorial spaces is to build accurate surrogate models, which is the central focus of our work.

\paragraph{Gaussian processes.}
GPs are non-parametric probabilistic models that are popular due to their flexibility and excellent uncertainty quantification.
A GP is specified by a mean function and a covariance function or kernel $k : \mathbb{R} \times \mathbb{R} \to \mathbb{R}$~\citep{Rasmussen2004}.
A common choice is the RBF or squared exponential kernel, which is given by
\[
    k(\mathbf{x}, \mathbf{y}) = s^2 \exp \Bigl\{ -\tfrac{1}{2} \sum_i (x_i - y_i)^2 / \ell_i^2 \Bigr\}
\]
where $\ell_i$ for $i=1,\cdots,D$ are the lengthscales that allow for automatic relevance determination (ARD) and $s^2$ is the signal variance.

\section{RELATED WORK}

\paragraph{Discrete and mixed spaces.}
In recent years, BO over discrete and mixed spaces has received considerable attention due to its wide applicability to science, engineering, AutoML, and other domains.
A variety of surrogate models have been proposed for the low-dimensional setting, but those are typically not effective for high-dimensional spaces, as we demonstrate in our experiments.

BOCS~\citep{baptista2018bayesian} targets binary spaces and employs a second-order Bayesian linear regression surrogate model, which exhibits poor scaling in the input dimension and may not support applications where the underlying black-box function requires a more complex model.
Prior work also considers different instantiations of GP models.
\combo{} \citep{oh2019combinatorial,MerCBO} employs GPs with discrete diffusion kernels over a combinatorial graph representation of the input space.
Recently, \citet{kim2022combinatorial} proposed an approach for combinatorial spaces based on continuous embeddings.
Their approach differs from ours in that they employ a uniformly random injective mapping and need to reconstruct the discrete input after optimizing the acquisition function in the embedded space.
There is also work on using deep generative models to create a latent space and apply continuous BO methods (often referred to as ``latent space BO''):  \cite{gomez2018, tripp2020sample, UAI_ermon, kajino_molecular_2019, notin2021improving, LADDER, LOL-BO}.
In contrast, our dictionary-based embeddings are computationally efficient and leverage the inherent structure in the combinatorial space. It is a fruitful direction to explore ways to synergistically combine the benefits of latent space and dictionary-based embeddings.

There is also prior work on approaches for constructing kernels over mixed spaces with both discrete and continuous variables~\citep{ru2020bayesian,oh2021mixed,deshwal2021bayesian}.
\citet{GarridoMerchn2020DealingWC} round the input variables before passing it to a GP with a canonical kernel.
Tree-Parzen Estimators (TPEs)~\citep{tpe} are applicable to mixed spaces and consider density estimation in the input space which is potentially challenging in high-dimensional settings.
SMAC~\citep{smac} employs a random forest surrogate model.

\paragraph{High-dimensional continuous spaces.}
There is a large body of work on BO over high-dimensional continuous spaces which can be classified into the following categories:
(1) Low-dimensional structure, which may be random embeddings~\citep{wang2016bayesian, Letham2019Re, NeurIPS2022}, hashing-based approaches~\citep{nayebi2019framework}, sparsity-inducing priors~\citep{eriksson2021high}, or learned embeddings~\citep{garnett2013active}.
(2) Additive structure~\citep{kandasamy2015high, gardner2017discovering}, which assumes that the high-dimensional black-box function decomposes into a sum of low-dimensional functions.
(3) Methods that avoid selecting highly uncertain boundary points.
\citet{eriksson2019scalable} use local trust regions centered around the best solutions and these trust regions are resized based on progress.
\citet{kirschner2019adaptive} optimize the acquisition function along one-dimensional lines, and \citet{BOCK} use a cylindrical kernel to focus on the interior of the domain.

\begin{figure*}
    \centering
    \includegraphics[width=0.7\textwidth]{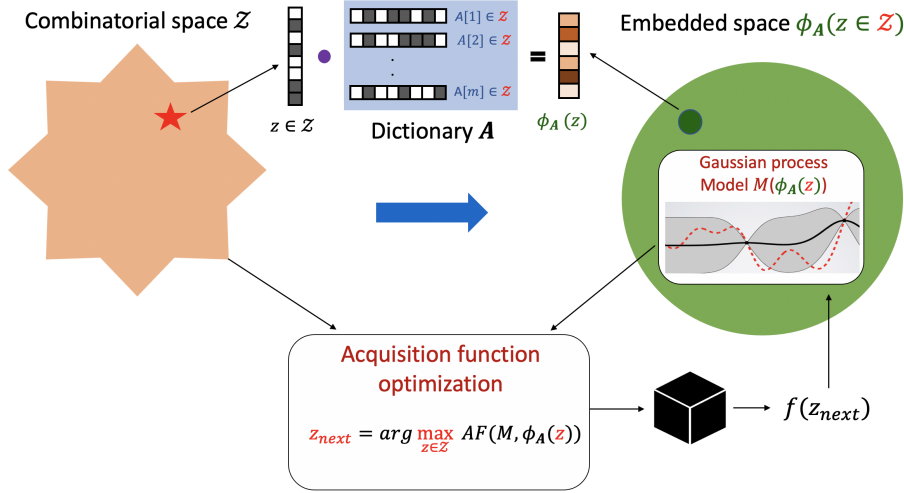}
    \caption{
       High-level overview of our \ourmethod{} algorithm for binary spaces.
       The dictionary $\dict$ contains $m$ discrete structures from the combinatorial space $\mathcal{Z}$.
        Each high-dimensional binary structure $\bz \in \mathcal{Z}$ (denoted by black and white squares) is embedded into a low-dimensional embedding $\hamm_\dict(\bz) \in \R^m$ (denoted by colored squares).
        We learn a GP surrogate model over the embedded space and perform acquisition function optimization in the original combinatorial space $\mathcal{Z}$ to select the next structure $\bz_{next}$ for function evaluation in each BO iteration.}
    \label{fig:intro_fig}
\end{figure*}


These methods, however, are specific to continuous spaces and there is little work on studying the challenges of high-dimensional combinatorial and mixed search spaces which arise in many real-world applications.
One exception is the recently proposed \casmo{} method~\citep{wan2021think}, which uses adaptive trust regions from continuous spaces~\citep{eriksson2019scalable} by replacing the standard Euclidean distance with Hamming distance for discrete (sub)spaces.
Our proposed \ourmethod{} algorithm and the associated dictionary-based kernel improve over \casmo{} in the high-dimensional setting.

\section{DICTIONARY EMBEDDINGS}
In this section, we introduce the idea of a \textbf{H}amming \textbf{e}mbedding via \textbf{d}ictionaries (\ourembedding{}),
a novel embedding for binary and categorical inputs that embeds the inputs into an ordinal feature space.
In particular, we employ a GP over the embedding $\hamm_\dict(\bold z)$ based on a dictionary $\dict$ containing $m$ discrete $d$-dimensional elements from the input space $\mathcal{Z}$.
The embedding $\hamm_\dict(\bold z)$ of size $m$ is obtained by computing the Hamming distance $h$ between $\bold z \in \mathcal{Z}$ and each element of the dictionary $\bold \atom_i \in \dict$.
That is,
\[
[\hamm_\dict(\bz)]_i = h(\atom_i, \bz).
\]

\ourembedding{} has several advantages.
First, it allows us to transform the challenging task of building models over high-dimensional discrete spaces into an application of GPs to the well-understood continuous space settings.
This subsequently allows us to perform inference of lengthscales associated with the embedding representations,
in contrast to the original categorical space where one lengthscale models the effect of a single category change.
Further, the efficient inference of lengthscales due to the embedding enables Automatic Relevance Determination (ARD) to prune away redundant dimensions effectively,
which we prove reduces the cardinality of the input space.
We show theoretically that this improves the sample-efficiency of GP bandits (UCB), commonly used for BO, and produces state-of-the-art results for BO on high-dimensional combinatorial spaces.
Further, while the core kernel is for binary spaces, it can easily be extended to mixed spaces with both continuous and discrete parameters by using a product kernel.

\begin{figure}[!ht]
    \centering
    \subfloat[Na\"ive random dictionary]{
    \includegraphics[width=0.22\textwidth]{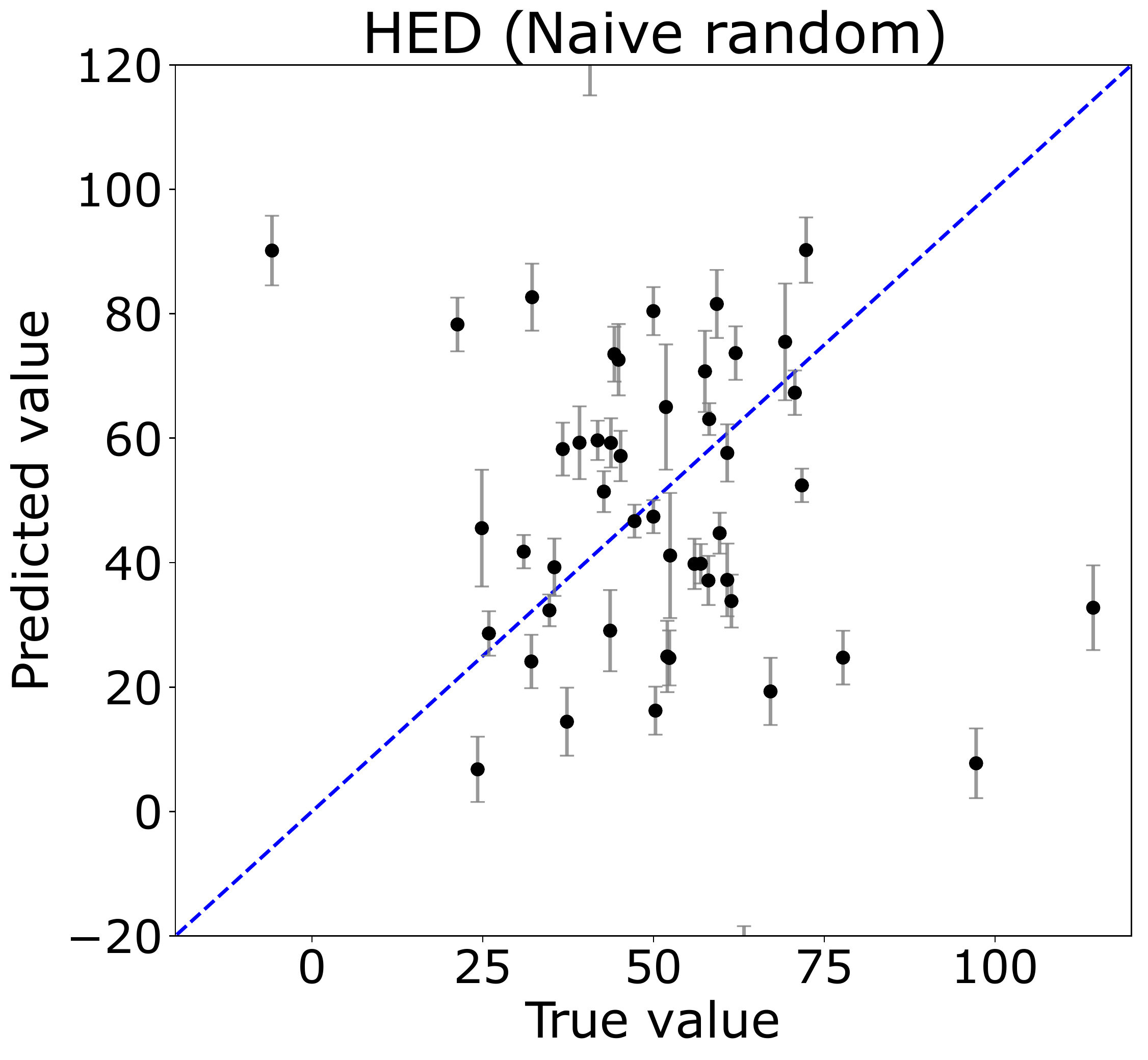}
    \label{fig:naive_random_predictions}
    }\quad
    \subfloat[Binary wavelet dictionary]{
    \includegraphics[width=0.22\textwidth]{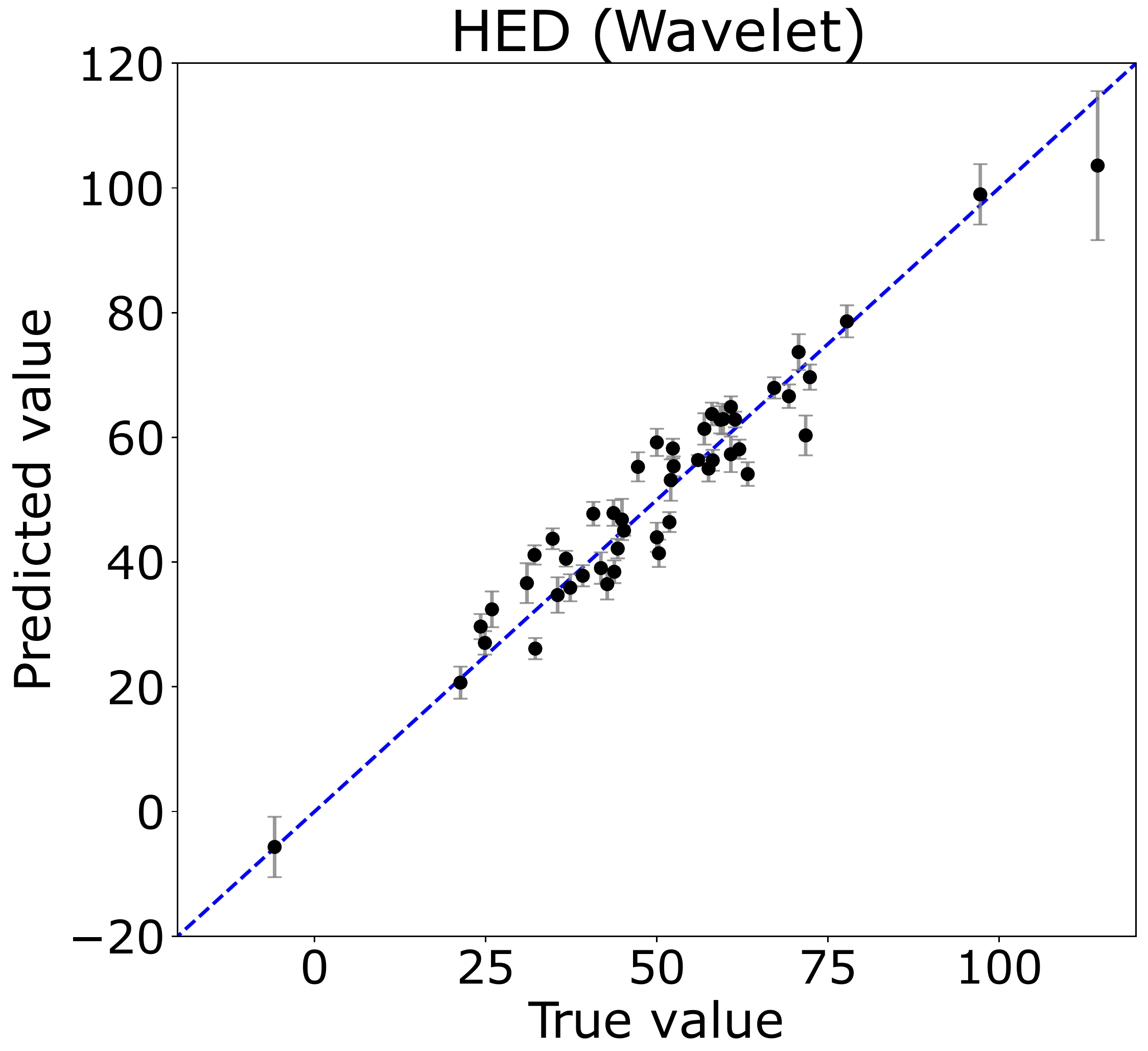}
    \label{fig:binary_wavelet_predictions}
    }\quad
    \caption{Mean predictions and associated $95$\% predictive intervals on a MaxSAT problem with 60 binary variables (see details in Sec.~\ref{sec:experiments}), comparing na\"ive random (left) and binary wavelet (right) dictionaries, using $50$ training points and predicting on $50$ test points.
}
    \label{fig:naive_wavelet_predictions}
\end{figure}

\paragraph{Dictionary construction procedure.}
The effectiveness of \ourembedding{} depends on the dictionary construction.
A na\"ive approach is to simply pick elements from the binary space uniformly at random.
However, this na\"ive approach turns out to exhibit poor predictive or BO performance on the test problems considered in this work.
For example, Fig.~\ref{fig:naive_random_predictions} illustrates the poor predictive performance of a GP using a dictionary kernel with a uniformly random binary dictionary on a MaxSAT test problem with $60$ binary variables.

Another idea is to use deterministic dictionary construction methods, such as multi-resolution {\em wavelets} \citep{mallat1989theory}, effective and well-known tools for studying real-valued signals at different scales by applying a set of orthogonal transforms to the data. In the context of binary spaces, binary wavelet transforms~\citep{swanson1996binary}
are highly related to the well-known orthogonal Hadamard matrices,
and are applied in signal processing, spectroscopy, and cryptography~\citep{hedayat1978hadamard, horadam2012hadamard}.
In contrast to the na\"ive random dictionary,
sub-sampled binary wavelet dictionaries lead to great predictive performance on the same MaxSAT problem, as shown in Fig.~\ref{fig:binary_wavelet_predictions}.

While binary wavelets constitute powerful dictionary designs for predictive and optimization problems in binary search spaces (for associated optimization results, see Fig.~\ref{fig:binwavelet_experiments}),
their construction for non powers-of-two is non-trivial, and even their existence for arbitrary dimensions is an open problem~\citep{hadamard1893resolution, baumert1962discovery, djokovic2014some}.
For this reason, we sub-sample the columns of the power-of-two dimensional binary wavelets for our experiments in non-power-of-two dimensions, see App.~\ref{app:sec:BinaryWavelet} for details.

To alleviate the difficulties around the general construction of binary wavelets, and to generalize our method to categorical spaces, we propose a randomized procedure that produces dictionary rows with a large range of sparsity levels.
We refer to this randomized procedure as ``diverse random.''

Algorithm \ref{alg:ps_dict2main} provides pseudo-code for constructing diverse random dictionaries defined over binary input spaces $\{0, 1\}^d$.
The key principle of this construction procedure is to diversify the dictionary rows by generating binary vectors determined by different bias parameters ($\theta$) of the Bernoulli distribution, unlike the na\"ive random where $\theta$ is always $1/2$.
Therefore, the rows of the na\"ive random dictionaries tend to have close to $d / 2$ non-zeros as $d$ grows, whereas the diverse random dictionaries exhibit a large range of sparsity levels due to varying $\theta$.
This algorithm can easily be generalized to inputs with categorical variables of different sizes, see App.~\ref{sec:categoricaldictionary} for details.
To summarize, the diverse random dictionaries can be constructed for arbitrary dimensions, extends naturally to categorical inputs, and as we will show later exhibits strong optimization performance on a wide range of benchmark problems.

\begin{algorithm}[!ht]
    \caption{Dictionary design for binary input space  $\{0,1\}^d$ with diversely sparse rows
    }
    \textbf{requires}: dictionary size $m$
    \begin{algorithmic}[1]
        \STATE Dictionary $\dict \leftarrow $ empty
        \FOR{$i$=$1, 2, \ldots, m$}
            \STATE $\atom_i \leftarrow $ empty
            \STATE Sample Bernoulli parameter $\theta \sim \text{Uniform}(0, 1)$
            \FOR{$j$=$1, 2, \ldots, d$}
                \STATE Sample binary number $a \sim \text{Bernoulli}(\theta)$
                \STATE $\atom_i \leftarrow \atom_i \cup a$
            \ENDFOR
            \STATE Add $\atom_i$ to dictionary: $\dict \leftarrow \dict \cup \atom_i $
        \ENDFOR
        \STATE \textbf{return} the dictionary $\dict$ of size $m \times d$
    \end{algorithmic}
    {\label{alg:ps_dict2main}}
\end{algorithm}

\paragraph{Representation of mixed input spaces.}
We have focused on a purely combinatorial input spaces $\mathcal{Z}$, but can naturally extend our approach to mixed search spaces consisting of both discrete and continuous parameters.
In this setting, we aim to model an input space $\mathcal{X} \times \mathcal{Z}$ where $\mathcal{X}$ is the domain of the continuous parameters.
To extend our approach to this mixed inputs setting, we use a product kernel leveraging the \ourembedding{} embedding for discrete parameters and a standard, e.g., Mat\'ern-$5/2$ kernel with ARD for the continuous parameters.

\section{\ourmethod{}: BAYESIAN OPTIMIZATION WITH DICTIONARY EMBEDDINGS}
\label{sec:dibo}

Our proposed \ourmethod{} method is a straightforward instantiation of the generic BO framework.
We use a GP with a standard Mat\'ern-$5/2$ kernel with ARD on the \ourembedding{} embedding as the surrogate model,
and we adopt the commonly used Expected Improvement (EI) acquisition function for single-objective problems.
In our setting, EI takes as inputs the surrogate model $\mathcal{M}$ and the embedding $\hamm_\dict(\bold z)$ to score the utility of evaluating the structure $\bold z \in \mathcal{Z}$.
In order to optimize the acquisition function over the discrete space $\mathcal{Z}$, we employ local search from randomly generated initial conditions.

Algorithm~\ref{alg:overall_bo} shows the pseudo-code of our method.
We use a small random initial training set of elements in~$\mathcal{Z}$ and their function evaluations to construct an initial surrogate model $\mathcal{M}(\hamm_\dict(\bz))$.
We generate a new dictionary~$\dict$ in each BO iteration using a randomized procedure described in Alg.~\ref{alg:ps_dict2main}, and refit the GP model using the corresponding embedding $\hamm_\dict(\bold z)$.
For each BO iteration~$j$, we select the next structure $\bz_j$ by optimizing the acquisition function.
We add $\bz_j$ and the corresponding function value $f(\bz_j)$ to the training data $D_j$ and train a new surrogate model $\mathcal{M}(\hamm_\dict(\bold z))$ using $D_j$.
We repeat these steps until the query budget is exhausted and return the best input $\bold z_{\text{best}} \in \mathcal{Z}$.

\begin{algorithm}[!ht]
    \caption{\ourmethod{} ($m$) Algorithm}
    \textbf{requires}: black-box objective $f$, discrete space $\mathcal{Z}$ with dimensionality $d$, dictionary size $m$
    \begin{algorithmic}[1]
        \STATE ${D}_0 \leftarrow$ small random initial training data
        \FOR{$j$=$1, 2, \dots$}
        \STATE Construct dictionary $\dict$ of size $m$
        \STATE Compute low-dimensional embedding $\hamm_\dict(\bold z)$ for
        \Statex  \hskip 2.0em each input structure $\bz \in D_j$ using dictionary $\dict$
        \STATE Fit a GP $\mathcal{M}$ on the embedded space $\hamm_\dict(\bold z)$
        \STATE Maximize the acquisition function in the discrete
        \Statex  \hskip 2.0em space $\mathcal{Z}$: $\bold z_{j}$ = $\arg \max_{\bold z \in \mathcal{Z}} \alpha(\mathcal{M}(\hamm_\dict(\bz)))$
        \STATE Evaluate the selected structure $\bz_{j}$ to get $f(\bz_j)$
        \STATE Aggregate training data: ${D_j} \leftarrow D_{j-1} \cup \{\bold z_j, f(\bold z_j)\}$
        \ENDFOR
        \STATE \textbf{return} $\bold z_{best}$ = $\arg \min \{f(\bold z_1), f(\bold z_2) \cdots \}$
    \end{algorithmic}
    {\label{alg:overall_bo}}
\end{algorithm}

To optimize the acquisition function over mixed search spaces, we perform alternating search over continuous and
discrete subspaces, a common approach in BO over mixed spaces~\citep{oh2021mixed,deshwal2021bayesian,wan2021think}.
We use local search for discrete parameters and gradient-based optimization 
for continuous parameters. While acquisition function optimization over discrete spaces is a challenging problem, local search with restarts has been shown to be effective in practice~\citep{oh2019combinatorial}. 

\section{THEORETICAL ANALYSIS OF \ourmethod{}}
\label{sec:theory}
In the following, we derive a surprising relationship for the Hamming embedding with an affine transformation, explaining why canonical linear embeddings (e.g. Gaussian) do not perform well.
We also provide a regret bound for BO with the dictionary kernel that crucially relies on a reduction in the cardinality -- not the dimensionality -- of the embedded search space.
Our results are stated for binary search spaces, but can be readily generalized to categorical variables using a binary encoding, e.g., one-hot encoding, or more efficiently with $\lceil \log_2(c) \rceil$ bits for $c$ categories. 

Our first proposition shows that the Hamming embedding of vectors in $\{0, 1\}^d$ is equivalent to an affine transformation of the $\{\pm 1\}$-encoding of the binary vector.
\begin{prop}[Affine Representation]
    \label{prop:affine}
    Let $\dict \in \{0, 1\}^{m\times d}$, $\bz \in \{0, 1\}^d$.
    Then
    \begin{equation}
    \label{eq:affine}
    2\hamm_\dict(\bz) = d \bold 1_m - \bar{\dict} \bar \bz,
    \end{equation}
    where $\bar a_{ij} = 2a_{ij} - 1$ and $\bar z_i = 2z_i - 1 \in \{-1, 1\}$.
    \end{prop}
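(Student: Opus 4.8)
The plan is to prove the identity \eqref{eq:affine} componentwise, establishing that each entry of the left-hand side equals the corresponding entry of the right-hand side. Fixing a dictionary row $i$, I would compute $[\hamm_\dict(\bz)]_i = h(\atom_i, \bz) = \sum_{j=1}^d \mathbf{1}[a_{ij} \neq z_j]$, the number of coordinates where the dictionary atom and the input disagree. The whole proof reduces to understanding this single Hamming distance in terms of the $\{\pm 1\}$-encoded vectors $\bar\atom_i$ and $\bar\bz$.

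The key step is the elementary observation that for two bits $a, z \in \{0,1\}$ with signed encodings $\bar a = 2a-1$ and $\bar z = 2z-1$, the product $\bar a \bar z$ equals $+1$ when $a = z$ and $-1$ when $a \neq z$. Hence $\mathbf{1}[a \neq z] = \tfrac{1}{2}(1 - \bar a \bar z)$. Summing this over $j = 1, \dots, d$ gives
\begin{equation}
h(\atom_i, \bz) = \sum_{j=1}^d \tfrac{1}{2}(1 - \bar a_{ij} \bar z_j) = \tfrac{1}{2}\Bigl( d - \sum_{j=1}^d \bar a_{ij} \bar z_j \Bigr) = \tfrac{1}{2}\bigl( d - [\bar\dict \bar\bz]_i \bigr),
\end{equation}
where I recognize the inner sum as the $i$-th entry of the matrix-vector product $\bar\dict\bar\bz$. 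Multiplying through by $2$ yields $2[\hamm_\dict(\bz)]_i = d - [\bar\dict\bar\bz]_i$, which is exactly the $i$-th component of the claimed vector identity $2\hamm_\dict(\bz) = d\,\mathbf 1_m - \bar\dict\bar\bz$.

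The only genuine content is the per-coordinate XOR-to-product identity $\mathbf{1}[a \neq z] = \tfrac{1}{2}(1 - \bar a \bar z)$, which one can verify directly by checking the four cases $(a,z) \in \{0,1\}^2$, or by noting that $\bar a \bar z = (2a-1)(2z-1) = 4az - 2a - 2z + 1$ and comparing with the indicator. I do not expect any real obstacle here; the result is a clean linear-algebraic reformulation, and the main care is simply bookkeeping the factor of $2$ and ensuring the indicator-versus-agreement signs are oriented correctly (disagreement corresponds to $\bar a \bar z = -1$, which is why the sign in front of $\bar\dict\bar\bz$ is negative). Since the identity holds for every row $i \in \{1,\dots,m\}$, it holds as a vector equation, completing the proof.
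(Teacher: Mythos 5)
Your proof is correct and takes essentially the same route as the paper's: both expand the Hamming distance coordinate-wise and express the per-bit disagreement indicator algebraically, the paper via $\neg a\,z + a\,\neg z$ in the $\{0,1\}$ encoding (converting to $\{\pm 1\}$ only at the end), you via $\mathbf{1}[a \neq z] = \tfrac{1}{2}(1 - \bar a \bar z)$ from the outset. Your version is a slightly cleaner presentation of the same computation; no substantive difference.
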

    \begin{proof}
    See Appendix~\ref{app:affine}.
\end{proof}

Plugging Eq.~\eqref{eq:affine} into the embedded distance formula yields
\[
    \begin{aligned}
       2\|\hamm_\dict(\bz) - \hamm_\dict(\bz')\|_2 & = \| \bar{\dict} \bar{\br} \|_2,
    \end{aligned}
\]
where $\bar \br = \bar \bz - \bar \bz'$.
That is, the distance computation only relies on a {\it linear} projection of the difference vector $\bar \br$ of the $\{\pm 1\}$-encoding of the binary input vectors.
Furthermore, the embedding associated with the wavelet dictionary described in App.~\ref{app:sec:BinaryWavelet} is thus equivalent up to a constant shift to a sub-sampled Hadamard transform, a type of Fourier transform on Boolean fields.

Proposition~\ref{prop:affine} proves the equivalence of the dictionary-based kernel to a canonical kernel (e.g. Mat\'ern) evaluated on linearly projected input data.
Given the significant prior work on BO on subspaces~\citep{wang2016bayesian,Letham2019Re} and on properties of linear projections~\citep{kasper2017optimality}, one might assume that canonical linear embedding designs like Gaussian random matrices will perform well in our setting.
However, this is not the case, as we demonstrate in the empirical evaluation.

To understand why, first note that \ourmethod{} is effectively carrying out the optimization in the transformed search space
\[
    \label{eq:embedded_search_space}
    \calS_{\dict} = \bigl\{ \hamm_\dict(\bz) \ | \ \bz \in \{0, 1\}^d\bigr\}.
\]
While linear embeddings generally reduce the {\it dimensionality} of the search space, they do not necessarily lead to a reduction in the {\it cardinality} $|\calS_{\dict}|$, a key quantity in regret bounds for BO in finite search spaces.
Indeed, while Gaussian random projections satisfy many desirable properties, including approximate distance preservation and dimensionality reduction, our next result shows that even a one-dimensional Gaussian random projection preserves the full cardinality of the original search space almost surely.
\vspace{1ex}
\begin{prop}
    Define $\calS_{\bold a} = \{\bold a^\top \bz \ | \ \bz \in \{\pm 1\}^d\}$, and let $\atom \sim \mathcal{N}(\bold 0, \bold{I}_{d})$.
    Then $|\calS_{\atom}| = 2^d$ almost surely.
\end{prop}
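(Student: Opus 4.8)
The plan is to show that the linear map $\bz \mapsto \atom^\top \bz$ is injective on $\{\pm 1\}^d$ almost surely; since the domain $\{\pm 1\}^d$ has exactly $2^d$ elements, injectivity immediately yields $|\calS_{\atom}| = 2^d$. The map can fail to be injective only if two distinct sign vectors $\bz \neq \bz'$ are sent to the same real number, i.e. $\atom^\top(\bz - \bz') = 0$. I would therefore recast the claim as an upper bound on the probability that such a collision occurs for at least one pair, and show that this probability is zero.

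First I would fix an arbitrary pair of distinct vectors $\bz, \bz' \in \{\pm 1\}^d$ and set $\bold w = \bz - \bz'$, which is a nonzero vector (each entry lies in $\{-2, 0, 2\}$ and at least one is nonzero). The collision event is exactly $\{\atom^\top \bold w = 0\}$, the event that $\atom$ lands in the linear hyperplane $\bold w^\perp$ through the origin. Because $\atom \sim \mathcal{N}(\bold 0, \bold I_d)$ has a density with respect to $d$-dimensional Lebesgue measure and any fixed hyperplane is Lebesgue-null, I conclude $\mathbb{P}(\atom^\top \bold w = 0) = 0$. Equivalently, $\atom^\top \bold w$ is a univariate Gaussian with variance $\|\bold w\|_2^2 > 0$, hence nondegenerate, so the probability that it equals the single value $0$ vanishes.

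Finally I would take a union bound over all pairs. There are only finitely many unordered pairs $\{\bz, \bz'\}$ with $\bz \neq \bz'$, namely $\binom{2^d}{2}$ of them, so the event that some collision occurs is a finite union of null events and therefore itself has probability zero. On the complementary event, which has full probability, the map is injective and $|\calS_{\atom}| = 2^d$.

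I do not expect any genuine obstacle here; the argument is a standard measure-theoretic one. The only points requiring care are justifying that a single fixed hyperplane through the origin is $\atom$-null (immediate from the absolute continuity of the Gaussian, or from the nondegeneracy of the induced one-dimensional marginal), and ensuring the union bound ranges over a \emph{finite} index set so that the conclusion is genuinely almost sure rather than merely holding off a countable union whose nullity would still need the same argument.
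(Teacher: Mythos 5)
Your argument is correct and matches the paper's proof essentially step for step: for each fixed pair of distinct sign vectors the collision event $\{\atom^\top(\bz - \bz') = 0\}$ confines $\atom$ to a hyperplane, which is null under the Gaussian measure, and subadditivity over the finitely many pairs gives almost-sure injectivity and hence $|\calS_{\atom}| = 2^d$. No gaps; nothing further to add.
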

\begin{proof}
    See Appendix~\ref{app:cardinality}.
\end{proof}

In contrast, our next result presents a bound on the cardinality of $\calS_\dict$ that depends on a measure of the variability $\mu_\dict$ of the dictionary rows and grows only polynomially with $d$.
\begin{prop}[Embedding Cardinality]
    \label{prop:cardinality}
    Let $\dict \in \{0, 1\}^{m \times d}$.
    Then the cardinality of the embedded search space $\calS_{\dict}$ can be bounded above by
    \[
        \left | \calS_{\dict} \right|
        \leq \left[ (\mu_{\dict}+1)(d+1-\mu_{\dict}) \right]^{\lfloor m/2 \rfloor} (d+1)^{m \Mod 2}
    \]
    where $\mu_{\dict} = \max_{i,j} \ \max(h(\atom_i, \atom_j), h(\neg \atom_i, \atom_j))$,
    and $h$ is the Hamming distance.
\end{prop}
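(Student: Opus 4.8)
The plan is to count $|\calS_\dict|$ by grouping the $m$ embedding coordinates into $\lfloor m/2 \rfloor$ pairs plus, when $m$ is odd, a single leftover coordinate, and to control each group separately. The structural backbone is the elementary fact that if we split the coordinates of $\hamm_\dict(\bz)$ into disjoint blocks, the map sending each realized vector to the tuple of its block-restrictions is injective; hence $|\calS_\dict|$ is at most the product, over blocks, of the number of distinct block-restrictions realized as $\bz$ ranges over $\{0,1\}^d$. A single coordinate $h(\atom_i, \bz)$ takes at most $d+1$ values, which will supply the factor $(d+1)^{m \bmod 2}$ from the leftover coordinate.

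The core step is the pairwise count. For a pair $(\atom_i, \atom_j)$ set $t = h(\atom_i, \atom_j)$. I would pass to the $\{\pm1\}$-encoding via Proposition~\ref{prop:affine} and apply the $2\times 2$ Hadamard change of basis, so that the value-pair $(h(\atom_i,\bz), h(\atom_j,\bz))$ is equivalent to the sum $(\bar{\atom}_i + \bar{\atom}_j)^\top \bar{\bz}$ and the difference $(\bar{\atom}_i - \bar{\atom}_j)^\top \bar{\bz}$. The sum vector is supported only on the $d - t$ positions where $\atom_i, \atom_j$ agree, and the difference vector only on the $t$ positions where they disagree; therefore the sum takes at most $d - t + 1$ values and the difference at most $t + 1$ values, giving at most $(t+1)(d-t+1)$ distinct value-pairs for the block.

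It remains to bound each factor by $(\mu_\dict + 1)(d + 1 - \mu_\dict)$ and multiply. Here I would use that $\max(h(\atom_i,\atom_j), h(\neg\atom_i, \atom_j)) = \max(t, d-t)$, so by definition $M_{ij} := \max(t,d-t) \le \mu_\dict$ for every pair, and that the map $x \mapsto (x+1)(d+1-x)$ is symmetric about $d/2$, making the block count equal to $(M_{ij}+1)(d+1-M_{ij})$. I expect this final comparison to be the main obstacle: the function $(x+1)(d+1-x)$ is \emph{largest} in the middle of $[0,d]$ and decreasing on $[d/2, d]$, so pinning a block's count to the single scalar $\mu_\dict$ is delicate, and a naive product-of-projections estimate that treats the pairs independently is too lossy to reach the stated bound. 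The crux is thus to show that the \emph{global} cardinality of the sign-cube image collapses to $[(\mu_\dict+1)(d+1-\mu_\dict)]^{\lfloor m/2 \rfloor}$, which requires tracking how $\mu_\dict$ simultaneously constrains all pairwise Hamming distances and exploiting the resulting linear dependencies among the coordinates, rather than bounding each pair in isolation.
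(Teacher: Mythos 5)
Your block decomposition and your pairwise count are exactly the paper's argument. The paper splits the index set into the positions $s$ where $\atom_i$ and $\atom_j$ agree and its complement, observes that the two coordinates of the pair-embedding move in lockstep on $s$ and in opposition on $\neg s$, and concludes that the value-pair is determined by two independent counts ranging over $\{0,\dots,k\}$ and $\{0,\dots,d-k\}$ with $k=|s|$; your $2\times 2$ Hadamard change of basis on the $\pm 1$ encoding is the same computation in different notation. Both yield the exact per-pair cardinality $(\mu_{ij}+1)(d+1-\mu_{ij})$ with $\mu_{ij}=\max(h(\atom_i,\atom_j),h(\neg\atom_i,\atom_j))$, and both handle an odd leftover coordinate with the factor $d+1$. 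Where your proposal stops is the step you yourself flag: passing from the per-pair quantities $\mu_{ij}$ to the global $\mu_\dict=\max_{i,j}\mu_{ij}$. You defer this to an unspecified argument about global linear dependencies, so as written the proof is incomplete.

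Your hesitation is, however, well founded, and you should know that the paper's own proof does not close this gap either: it simply ``applies the bound to $\lfloor m/2\rfloor$ pairs,'' silently replacing each $\mu_{ij}$ by the larger $\mu_\dict$ inside $f(x)=(x+1)(d+1-x)$. Since $f$ is decreasing on $[d/2,d]$ and every $\mu_{ij}\ge d/2$, this substitution makes each factor \emph{smaller}, so the legitimate estimate $|\calS_\dict|\le \prod_l f(\mu_{i_l j_l})\cdot (d+1)^{m \Mod 2}$ (product over any disjoint pairing of the rows) does not imply the stated one. In fact the stated inequality fails in general: take $d=30$, $m=4$, $\atom_1=\atom_2=\mathbf{1}_{30}$, $\atom_3$ the indicator of positions $1$--$20$, and $\atom_4$ the indicator of positions $1$--$10$ and $21$--$30$. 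The duplicated pair forces $\mu_\dict=d$, so the claimed bound is $(d+1)^2=961$; but writing $p,q,r\in\{0,\dots,10\}$ for the number of ones of $\bz$ in each block of ten coordinates, $\hamm_\dict(\bz)=(30-p-q-r,\,30-p-q-r,\,20-p-q+r,\,20-p+q-r)$ is an injective affine function of $(p,q,r)$ (the linear part has determinant $4$), so $|\calS_\dict|=11^3=1331>961$. Consequently the ``crux'' you propose to resolve by tracking global dependencies cannot be resolved: what your (and the paper's) pairwise analysis actually proves is the bound with the product of the per-pair factors, or equivalently the stated form with $\mu_\dict$ replaced by $\min_l \mu_{i_l j_l}$ over the chosen pairing; this coincides with the stated bound only when all pairwise coherences are equal, as for the wavelet dictionary.
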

\begin{proof}
    See Appendix~\ref{app:cardinality}.
\end{proof}

The affine representation of Prop.~\ref{prop:affine}  implies a strong similarity of $\mu_{\dict}$ to the coherence of the dictionary rows:
\[
    2\mu_\dict = d + \max_{i, j} \left | \bar \atom_i^\top \bar \atom_j \right |.
\]
The mutual coherence of dictionary columns is a central quantity in the theory of compressed sensing~\citep{tropp2004greed}.
Further, $\mu_\dict$ provides a theoretical motivation for the dictionary designs.
Indeed, the binary wavelet dictionary of App.~\ref{app:sec:BinaryWavelet} reaches the lowest possible coherence of $d/2$ in power-of-two dimensions and leads to great performance on a variety of benchmarks (see Fig.~\ref{fig:binwavelet_experiments}).
Intuitively, we want to reduce the cardinality of the search space enough to accelerate optimization, but not so much that it fails to be a useful inductive bias.
Note that $d/2 \leq \mu_\dict \leq d$ and the bound attains its maximum for $\mu_\dict = d/2$.
For example, having duplicate elements in the dictionary would imply $\mu_\dict = d$, and lead to a much larger drop in the cardinality for the same $m$ than for the wavelet dictionary of App.~\ref{app:sec:BinaryWavelet}.

We now prepare to apply the bound of Prop.~\ref{prop:cardinality} in conjunction with the seminal result of \cite{ucb} to provide an improved regret bound for \ourmethod{}.
Recall that the regret at iteration $t$ is defined by $r_t = f(\bz^*) - f(\bz_t)$, where $\bz^*$ is an optimal point and $\bz_t$ is the point chosen in the $t^{\text{th}}$ iteration.
The cumulative regret is $R_T = \sum_{t=1}^T f(\bz^*) - f(\bz_t)$ and is a key quantity in the theoretical study of BO algorithms.
Many BO methods are no-regret (i.e. $\lim_{T\to\infty} R_T/T= 0$), though the rate with which $R_T$ approaches zero varies significantly.

\cite{ucb} prove a regret bound that is sub-linear in $T$ for GP-based optimization with the upper confidence bound (UCB) acquisition function $\arg \max_{\bz} \mu_{t-1}(\bz) + \sqrt{\beta_t} \sigma_{t-1}(\bz)$, where $\mu_t$ (resp. $\sigma_t^2$) are the predictive mean (resp. variance) of the GP after $t$ iterations. 
The bound mainly depends on two quantities: (1) The information gain after $T$ iterations $\gamma_T = \log |\bold I + \sigma^{-2} \bold K_T|$, where $\bold K_T$ is the kernel matrix evaluated on the inputs $\{\bz_t\}_{t=1}^T$ that were chosen in the first $T$ iterations and $\sigma$ is the standard deviation of the observations noise.
(2) The cardinality of the search space $|\calS|$, which we bound in Prop.~\ref{prop:cardinality} for \ourmethod{}.
Notably, $\gamma_T$ depends on the kernel function and for the Mat\'ern-$\nu$ kernel in our experiments, $\gamma_T = \mathcal{O}(T^{d(d+1)/(2\nu + d(d+1))} \log T)$.
In the following, we use $\mathcal{O}^{*}$ to refer to $\mathcal{O}$ with log factors suppressed.
\vspace{1ex}
\begin{thm}
\label{thm:regret}
  Let $\dict$ have $m$ rows, $\delta \in (0, 1)$, and $\beta_t = 2 \log(|\calS_\dict|t^2\pi^2/6\delta)$.
  Then the cumulative regret associated with running UCB for a sample $f$ of a zero-mean GP with kernel function $k_\ourmethod{}(\bz, \bz') = k_{\text{base}}(\hamm_\dict(\bz), \hamm_\dict(\bz'))$, is upper-bounded by $\mathcal{O}^{*}(\sqrt{T \gamma_T m})$ with probability $1 - \delta$,
  where $\gamma_T$ is the maximum information gain of $k_{\text{base}}$.
\end{thm}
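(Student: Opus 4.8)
The plan is to reduce Theorem~\ref{thm:regret} to a direct application of the finite-domain GP-UCB regret bound of \cite{ucb}, using the cardinality estimate of Prop.~\ref{prop:cardinality} to control the confidence parameter $\beta_t$. The first observation is that a GP sample $f$ with kernel $k_{\ourmethod}(\bz,\bz') = k_{\text{base}}(\hamm_\dict(\bz),\hamm_\dict(\bz'))$ depends on $\bz$ only through its embedding $\hamm_\dict(\bz)$; hence $f$ factors as $f(\bz) = g(\hamm_\dict(\bz))$, where $g$ is a sample from a zero-mean GP with kernel $k_{\text{base}}$ defined on the finite set $\calS_{\dict}$. Running UCB over the hypercube $\{0,1\}^d$ with $k_{\ourmethod}$ is therefore identical to running UCB over the finite decision set $\calS_{\dict}$ with $k_{\text{base}}$: points lying in a common fiber of $\hamm_\dict$ share the same posterior mean, variance, and acquisition value, so without loss of generality the algorithm selects among distinct embeddings.

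With this reduction in hand, I would invoke Theorem~1 of \cite{ucb} for finite decision sets. Taking the cardinality to be $|\calS_{\dict}|$ and the prescribed $\beta_t = 2\log(|\calS_{\dict}| t^2\pi^2/6\delta)$, it yields that with probability at least $1-\delta$,
\[
 R_T \le \sqrt{C_1\, T\, \beta_T\, \tilde\gamma_T}, \qquad C_1 = 8/\log(1+\sigma^{-2}),
\]
where $\tilde\gamma_T$ is the maximum information gain of $k_{\text{base}}$ restricted to $\calS_{\dict}$. Because $\calS_{\dict} \subseteq \R^m$ is a subset of the full domain of $k_{\text{base}}$, and the maximum information gain is monotone under restriction of the domain, we have $\tilde\gamma_T \le \gamma_T$, the information gain of $k_{\text{base}}$, so we may replace $\tilde\gamma_T$ by $\gamma_T$.

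It then remains to simplify $\beta_T = 2\log|\calS_{\dict}| + 2\log(T^2\pi^2/6\delta)$. Substituting Prop.~\ref{prop:cardinality} and using the crude estimate $(\mu_{\dict}+1)(d+1-\mu_{\dict}) \le (d+1)^2$ (both factors are at most $d+1$ since $0 \le \mu_{\dict} \le d$) gives
\[
 \log|\calS_{\dict}| \le 2\lfloor m/2\rfloor \log(d+1) + (m \Mod 2)\log(d+1) \le m\log(d+1).
\]
Hence $\beta_T = \mathcal{O}(m\log d + \log(T/\delta)) = \mathcal{O}^{*}(m)$, where $\mathcal{O}^{*}$ absorbs the logarithmic factors in $d$, $T$, and $1/\delta$. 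Plugging this into the regret inequality yields $R_T \le \sqrt{C_1\, T\, \beta_T\, \gamma_T} = \mathcal{O}^{*}(\sqrt{T\,\gamma_T\, m})$, as claimed.

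The main technical obstacle is the second step: certifying that the maximum information gain of the composed kernel $k_{\ourmethod}$ over the hypercube is controlled by $\gamma_T$ of the base kernel. This needs the factorization through the embedding together with the domain-monotonicity of the information gain, and care that the non-injectivity of $\hamm_\dict$ (several binary vectors collapsing to one embedding) does not inflate $\gamma_T$ — it does not, since the kernel matrix over any point set is unchanged when points are replaced by their embeddings. By contrast, the cardinality simplification and the bookkeeping on $\beta_T$ are routine once Prop.~\ref{prop:cardinality} is available, and the appearance of $m$ rather than $d$ in the final rate is precisely the payoff of bounding the \emph{cardinality} of $\calS_{\dict}$ rather than its dimensionality.
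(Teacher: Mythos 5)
Your proposal is correct and follows essentially the same route as the paper's own proof: reduce \ourmethod{} to canonical GP-UCB on the finite embedded search space $\calS_{\dict}$, invoke Theorem~1 of \cite{ucb}, and use Prop.~\ref{prop:cardinality} to bound $\log|\calS_{\dict}| = \mathcal{O}^{*}(m)$. Your write-up merely makes explicit two steps the paper leaves implicit (the factorization of the GP sample through the embedding, and the domain-monotonicity of the information gain), which is sound but not a different argument.
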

Theorem~\ref{thm:regret} exhibits a reduced dimensionality-dependent regret scaling of $\mathcal{O}^{*}(\sqrt{m})$, 
compared to $\mathcal{O}^{*}(\sqrt{d})$ for non-embedded binary inputs, as long as $m$ is not too large.
We stress that this is due to the compressed cardinality of the search space, not the reduced dimensionality of the embedding.
However, it is also important to note that not just the cardinality matters for optimization performance, since there are two main objectives that are usually at odds: (1) finding a model that is expressive enough and (2) reducing the complexity of fitting and optimizing this model.
Simply reducing the cardinality of the search space will make it easier to fit the model, but potentially less likely to accurately model the underlying black-box objective function.

Starting with a large dictionary allows the model to choose from a large number of elements and adaptively prune redundant dimensions via ARD.
In fact, our experiments confirm that larger embedding dimensions tend to improve performance and that ARD effectively prunes away the majority of embedding dimensions (see Sec.~\ref{sec:model_fits}).
The fact that the embedding values are ordinal, rather than binary, likely aids the inference of appropriate length scales.
This results in the search space cardinality reduction shown by Prop.~\ref{prop:cardinality}.

\section{EXPERIMENTS}
\label{sec:experiments}
We evaluate \ourmethod{} on wide range of challenging optimization problems for combinatorial and mixed search spaces.
We compare against several competitive baselines including \casmo{}, \combo{}, \cocabo{}, \smac{}, and random search.

\paragraph{Experimental setup.}
We use expected improvement as the acquisition function for all experiments.
However, note that our approach is agnostic to this choice and any other acquisition function can be employed, which makes it easy to extend \ourmethod{} to, e.g., multi-objective, multi-fidelity, and constrained settings.
We employ a Mat\'ern-5/2 kernel with ARD for both discrete and continuous variables.
When considering combinatorial search spaces, we optimize the acquisition function using hill-climbing local search, similarly to the approach used by \casmo{}~\citep{wan2021think}.
We follow Alg.~\ref{alg:ps_dict2main} (App.~\ref{sec:categoricaldictionary}) and $m = 128$ and the diverse random approach to construct dictionaries for all experiments.
The choice $m=128$ is investigated in an ablation study in Fig.~\ref{fig:ablation}.
Our code is built on top of the popular GPyTorch~\citep{gardner2018gpytorch} and BoTorch~\citep{balandat2020botorch} libraries.
We use the open-source implementations for all the baselines: \casmo{}~\footnote{\tiny\url{https://github.com/xingchenwan/Casmopolitan}}, \combo{}~\footnote{\tiny\url{https://github.com/QUVA-Lab/COMBO}}, \cocabo{}~\footnote{\tiny\url{https://github.com/rubinxin/CoCaBO_code}}, and \smac{}~\footnote{\tiny\url{https://github.com/automl/SMAC3}}.

\begin{figure*}[!ht]
    \centering
    \subfloat[LABS ($50$ binary parameters)]{
    \includegraphics[width=0.305\textwidth]{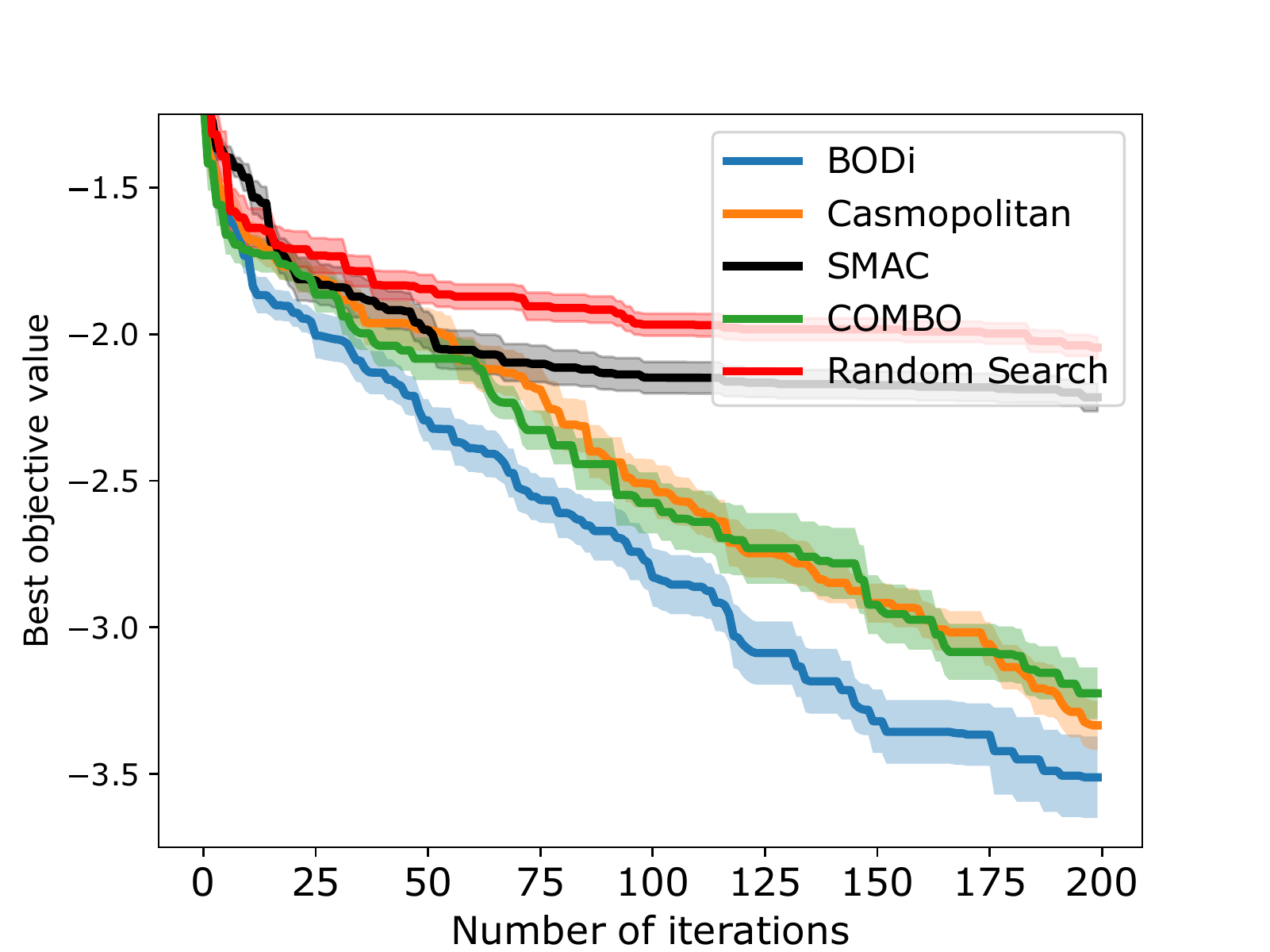}
    \label{fig:labs}
    }\quad
    \subfloat[MaxSAT ($60$ binary parameters)]{
    \includegraphics[width=0.30\textwidth]{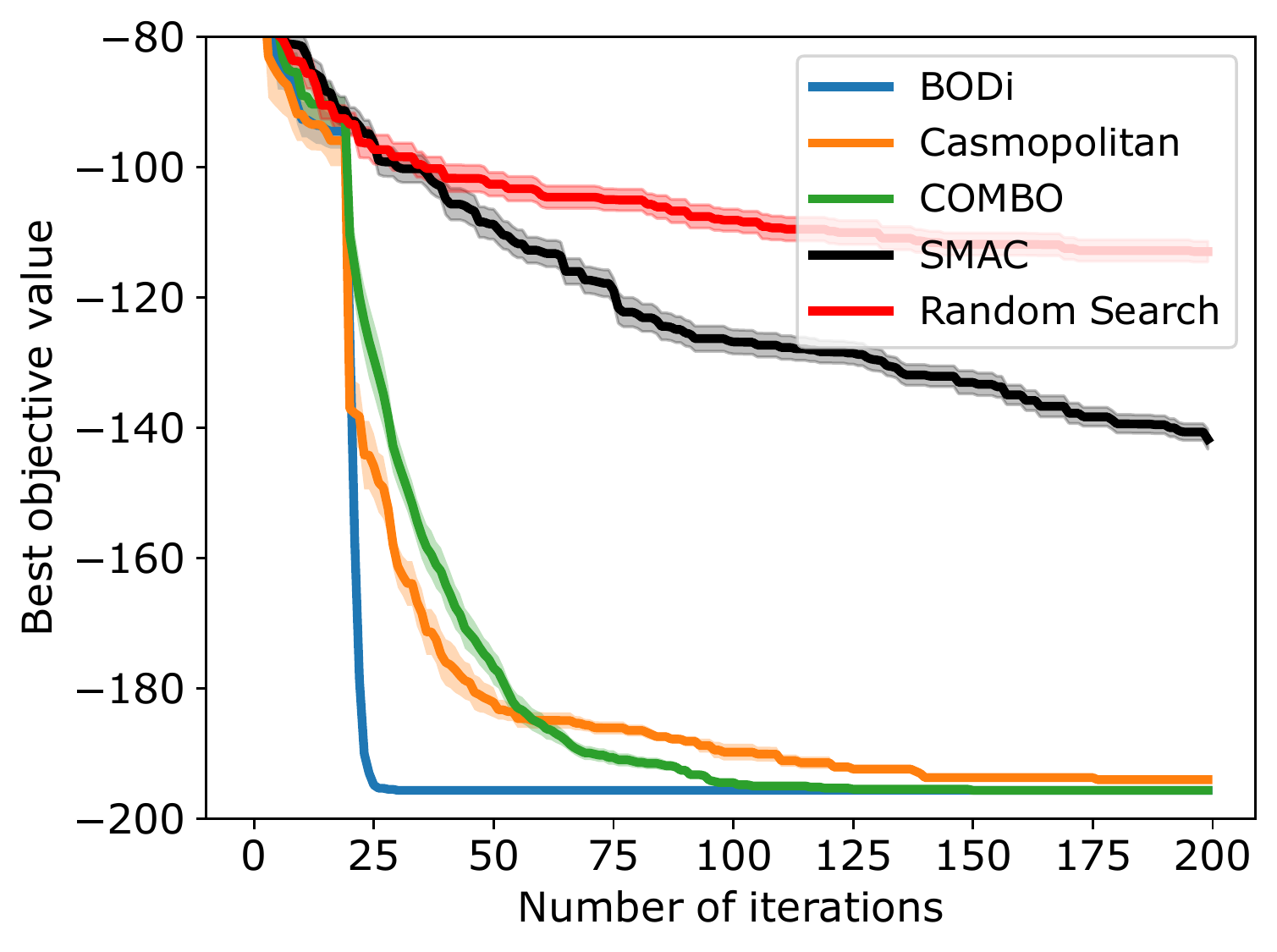}
    \label{fig:maxsat60}
    }\quad
    \subfloat[Pest Control ($25$ categorical \\ parameters with $5$ possible values)]{
    \includegraphics[width=0.285\textwidth]{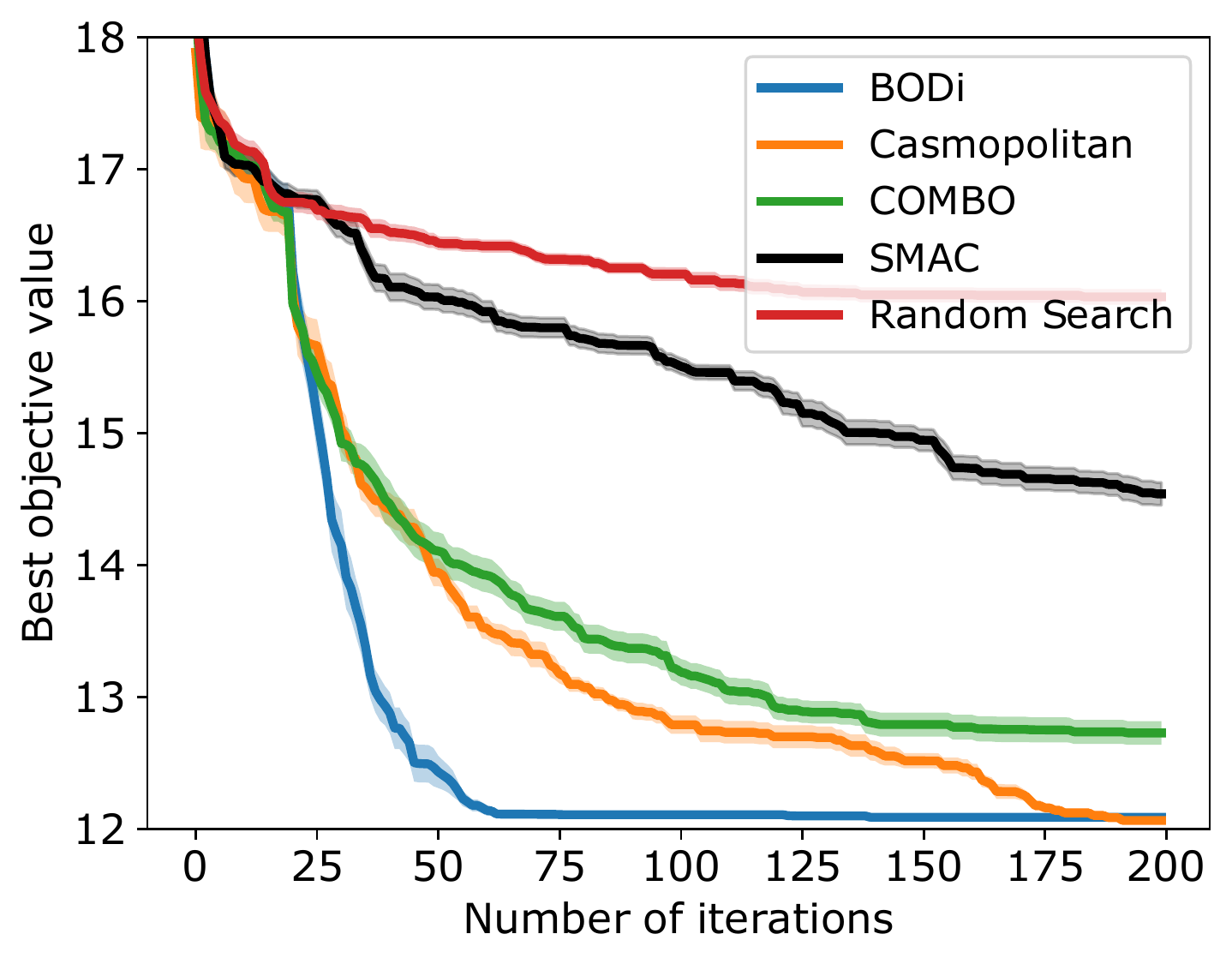}
    \label{fig:pestcontrol}
    }\quad
    \caption{We compare \ourmethod{} to \casmo{}, \combo{}, \smac{}, and random search on three high-dimensional combinatorial test problems. We find that \ourmethod{} consistently performs the best followed by \casmo{} and \combo{}.}
    \label{fig:experiments1}
\end{figure*}

\subsection{Combinatorial test problems}
\paragraph{LABS.}
The goal in the Low Auto-correlation Binary Sequences (LABS) problem is to find a binary sequence $\{1, -1\}$ of length $n$ that maximizes the  {\em Merit factor (MF)}:
\begin{align*}
    \max_{\mathbf{x} \in \{1, -1\}^n} \text{MF($\mathbf{x}$)} &= \frac{n^2}{E(\mathbf{x})} \hspace{1mm}, \\
    E(\mathbf{x}) &= \sum_{k=1}^{n-1} \left(\sum_{i=1}^{n-k} x_i x_{i+k}\right)^2
\end{align*}
This problem has diverse applications in multiple fields~\citep{LABS_statistical_physics,LABS_main}, including communications where it is used in high-precision interplanetary radar measurements of space-time curvature~\citep{LABS_communication}.
We evaluate all methods on the $50$-dimensional version of this problem.
Fig.~\ref{fig:labs} plots the negative MF and shows that \ourmethod{} finds significantly better solutions than the baselines.
While \combo{} and \casmo{} perform worse than \ourmethod{}, they find better solutions than \smac{}.
Random search performs quite poorly, indicating the importance of employing model-guided search techniques for challenging problems (the combinatorial space for LABS has $2^{50} \approx 1.2 \times 10^{15}$ configurations).
Note that \citet{LABS_main} published the optimizer $\bold x_{\text{opt}}$ of the 50-dimensional LABS problem with $\text{MF}(\bold x_{\text{opt}}) = 8.170$,
which was computed with a branch-and-bound algorithm at exponential computational cost.
We emphasize that our results here are not meant to advocate for the solution of this particular LABS problem using BO, but to serve as a comparison of the BO algorithms, which are designed to be sample efficient, on a challenging combinatorial optimization task.

\paragraph{Weighted maximum satisfiability.} The goal of this problem is to find a $60$-dimensional binary vector that maximizes the combined weights of satisfied clauses.
We use the benchmark problem \texttt{frb-frb10-6-4.wcnf}\footnote{\tiny \url{https://maxsat-evaluations.github.io/2018/index.html}}
of the Maximum Satisfiability Competition 2018\footnote{\tiny \url{http://sat2018.azurewebsites.net/competitions/}}, similar to
\citet{oh2019combinatorial} and \citet{wan2021think}.
Satisfiability problems are ubiquitous and frequently arise in many fundamental areas of computer science~\citep{biere2009handbook}.
Fig.~\ref{fig:maxsat60} shows that \ourmethod{} is quickly able to find a close-to-optimal solutions even though this combinatorial search space has as many as $2^{60} \approx 1.2 \times 10^{18}$ possible configurations.
The strong performance of \ourmethod{} on this problem is due to the superior model performance of the GP trained on the~\ourembedding{}, see Sec.~\ref{sec:model_fits}.

\paragraph{Pest control.} This problem concerns the control of pest spread in a chain of $25$ stations where a categorical choice of $5$ possible options can be made at each station to use a pesticide differing in terms of their cost and effectiveness.
This problem is challenging due to the $5^{25} \approx 3.0 \times 10^{17}$ total number of configurations.
From Fig.~\ref{fig:pestcontrol} we observe that \ourmethod{} quickly converges to a solution with objective value around $\approx 12$ and substantially outperforms the other baselines on this problem.

\begin{figure*}[!ht]
    \centering
    \subfloat[Mixed Ackley ($50$ binary parameters, \\ $3$ continuous parameters)]{
    \includegraphics[width=0.29\textwidth]{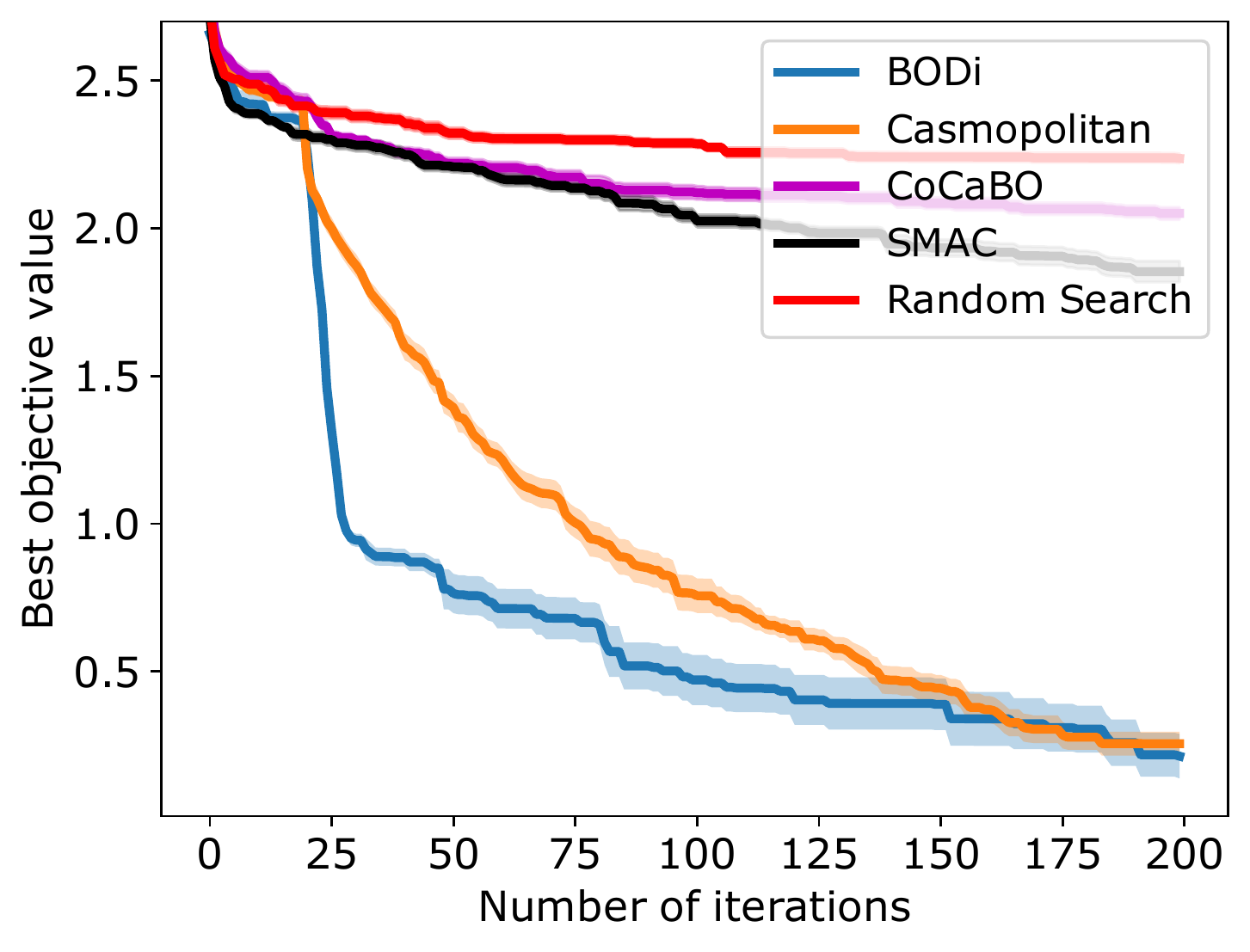}
    \label{fig:ackley53}
    }\quad
    \subfloat[SVM ($50$ binary parameters, \\ $3$ continuous parameters)]{
    \includegraphics[width=0.30\textwidth]{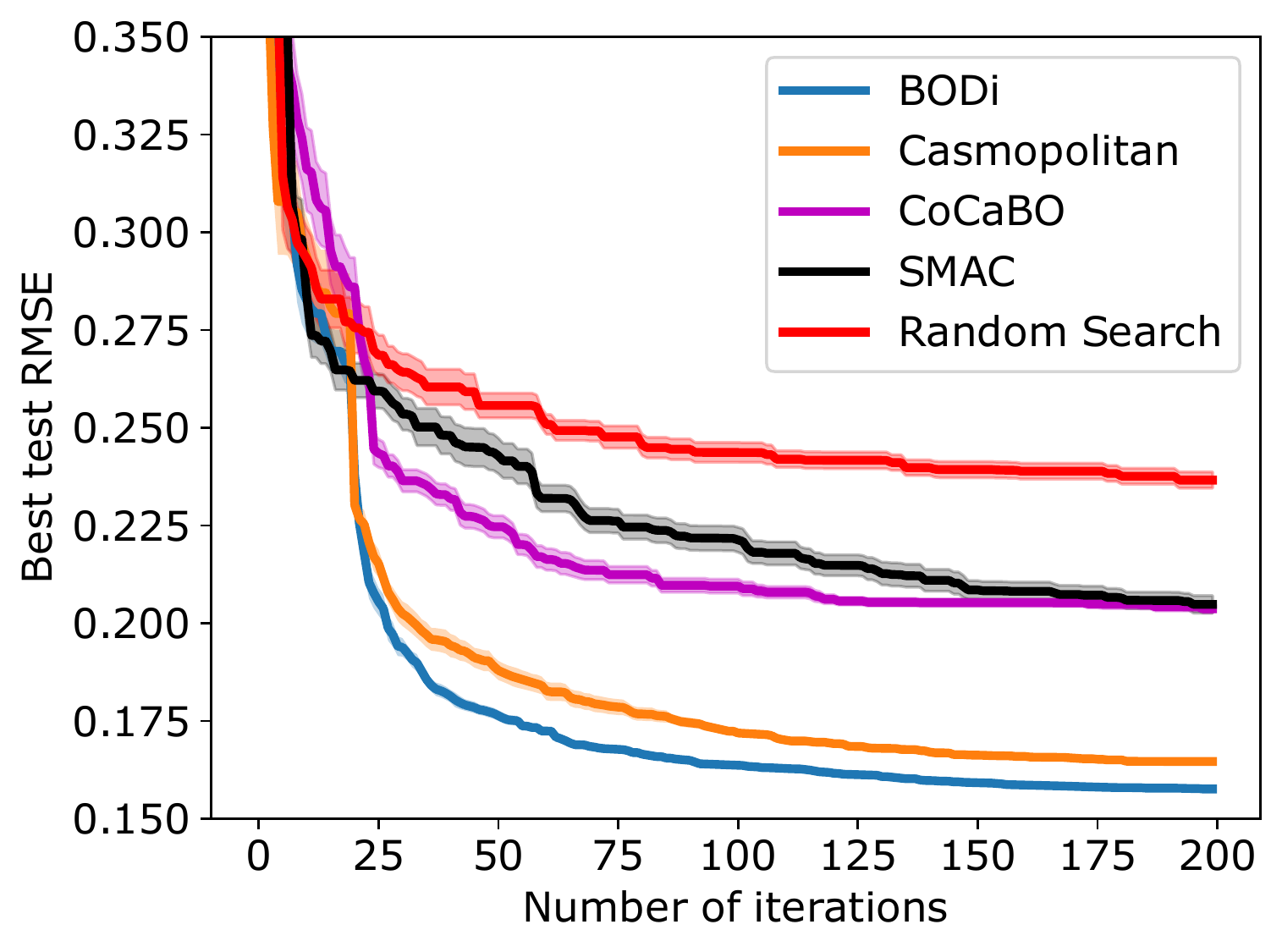}
    \label{fig:svm}
    }\quad
    \subfloat[LABS ablation ($50$ binary parameters)]{
    \includegraphics[width=0.305\textwidth]{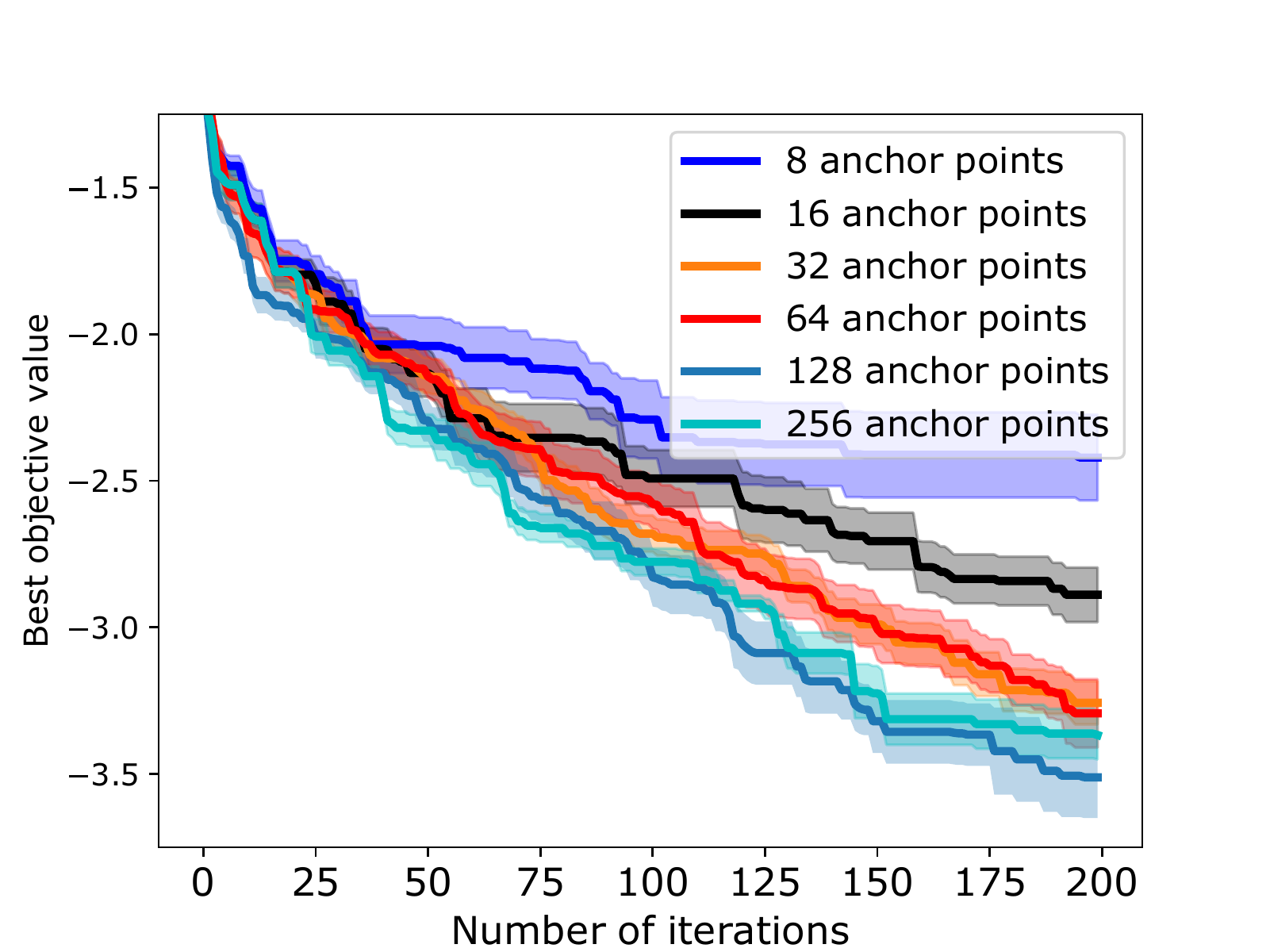}
    \label{fig:ablation}
    }\quad
    \caption{
        (Left, Middle) We compare \ourmethod{} to \casmo{}, \cocabo{}, \smac{}, and random search and two high-dimensional problems with both discrete and continuous parameters.
        \ourmethod{} converges faster than \casmo{} on the Ackley problem and performs better on the SVM problem.
        (Right) We study the sensitivity of \ourmethod{} to the size of the dictionary ($m$) and observe consistent performance as long as we do not use dictionaries with too few elements.}
    \label{fig:experiments2}
\end{figure*}

\subsection{Model performance}
\label{sec:model_fits}
To validate that a GP using the \ourembedding{} provides accurate and well-calibrated estimates relative to categorical overlap kernels (used in \casmo, \citep{wan2021think}), and the diffusion kernel (used in \combo, \citep{oh2019combinatorial}), we examine the predictive performance of these different kernels on a $60$-dimensional MaxSAT problem.
We generate $50$ training points and $50$ test points and compare the test predictions of the dictionary-based kernel with the GP relative to the overlap kernel and diffusion kernel.
The mean predictions on the test set with associated $95$\% predictive intervals are shown in Fig.~\ref{fig:model_fits}.

The \ourembedding{} with diverse random dictionary elements gives rise to an accurate model of the unknown black-box function, while overlap and diffusion kernels fail to produce accurate test predictions.
In addition, we also observe that \ourembedding{} with a Gaussian random dictionary -- computed via the affine representation of Prop.~\ref{prop:affine} -- performs poorly.
Finally, even though we use dictionaries with $m=128$ elements in Fig.~\ref{fig:model_fits}, it turns out that only $4$ of them have a lengthscale below $10$ in the fitted GP model.
This shows that ARD is able to effectively prune away the majority of dictionary elements and only use a small number of them, which leads to a tighter regret bound according to Thm.~\ref{thm:regret}.

\subsection{Mixed test problems}
\paragraph{Mixed Ackley.} We consider a mixed version of the standard Ackley problem from~\citep{wan2021think} with $50$ binary and $3$ continuous variables.
We see that \ourmethod{} makes quick progress and approaches the global optimal value of~$0$ (Fig.~\ref{fig:ackley53}).
Except for \casmo{}, all other baselines perform poorly on this problem.
Notably, the sub-sampled binary wavelet dictionary also performs particularly well on this problem, see App. Fig.~\ref{fig:binwavelet_ackley}.

\paragraph{Feature selection for SVM training.} In this problem, we consider joint feature selection and hyperparameter optimization for training a support vector machine (SVM) model on the UCI slice dataset~\citep{dua2019uci}.
We optimize over the inclusion/exclusion of~$50$ features, and additionally tune the $C$, $\epsilon$, and $\gamma$ hyperparameters of the SVM.
The goal is to find the optimal subset of features and values of the continuous hyperparameters in order to minimize the RMSE on a held-out test set.
Fig.~\ref{fig:svm} shows that \ourmethod{} performs slightly better than \casmo{} on this real-world problem.

\subsection{Ablation study}
We perform an ablation study on the sensitivity of \ourmethod{} to the number of elements of the dictionary (dictionary size).
We consider the $50$-dimensional LABS problem.
The results in Fig.~\ref{fig:ablation} show that dictionaries with $m=128$ or $m=256$ elements perform the best (albeit differences in performance are relatively small, at least for larger~$m$).
We observe that using a small dictionary (with $m=16$ or $m=32$ elements) results in inferior performance.
On the other hand, using a large number of elements increases the runtime of our method, which is why we opted for the choice of $m=128$ for all experiments.

\begin{figure}[!t]
    \centering
    \includegraphics[width=0.47\textwidth]{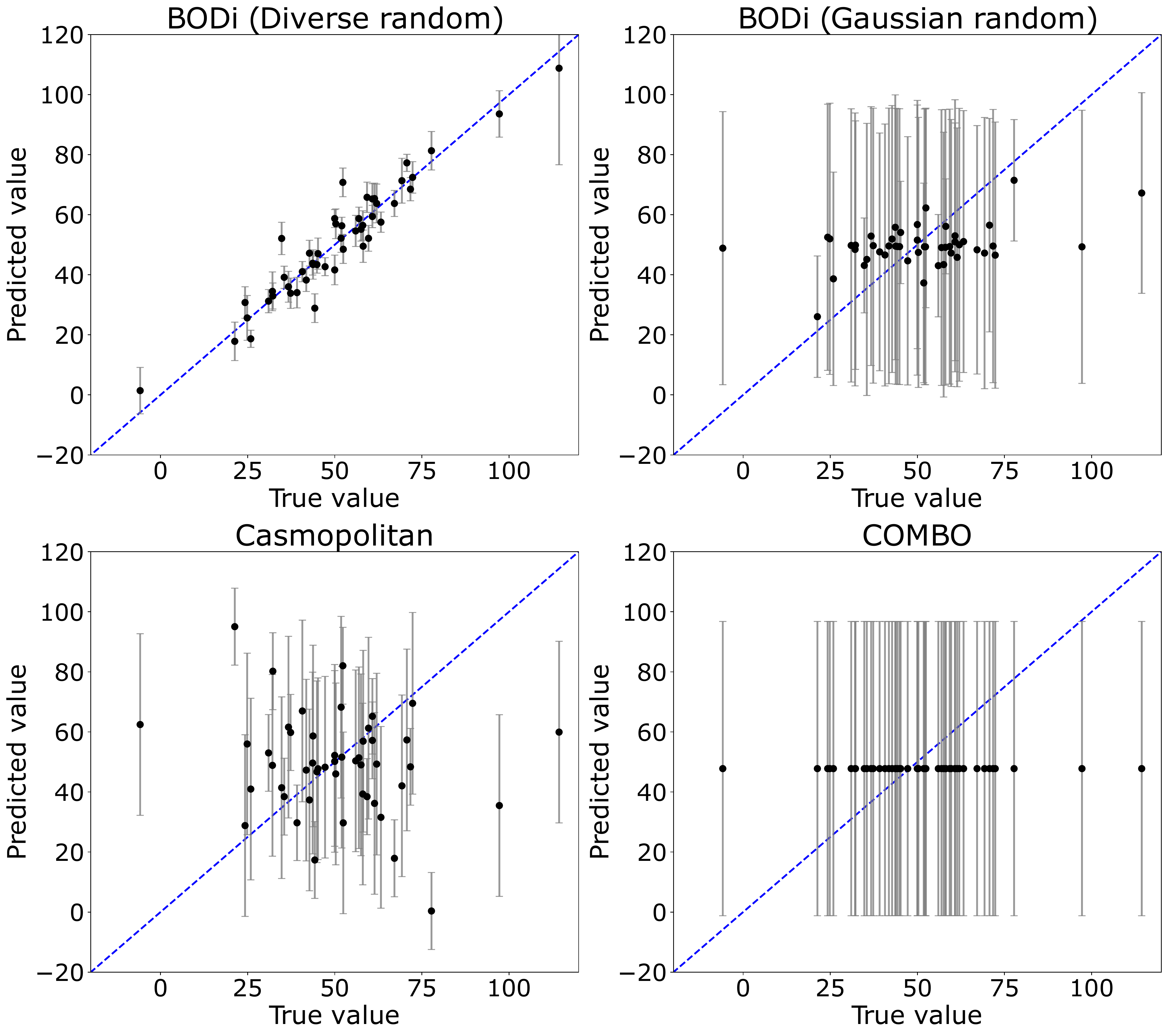}
    \caption{
    Mean predictions and associated $95$\% predictive intervals on then MaxSAT problem for \ourmethod{} with diverse random dictionary (top left), \ourmethod{} with a Gaussian random dictionary via the affine representation of Eq.~\eqref{eq:affine} (top right), \texttt{Casmopolitan} (bottom left), and \texttt{COMBO} (bottom right).
    We use $50$ training points and predict on $50$ test points.
    \ourmethod{} with the diverse random dictionary performs much better than with the Gaussian random embedding, validating our theoretical results in Sec.~\ref{sec:theory}.
    Our kernel also outperforms the isotropic kernel used by \casmo{} and the diffusion kernel used by \combo{}.}
    \label{fig:model_fits}
\end{figure}

\section{DISCUSSION}
We introduced a novel dictionary kernel for GP models, which is suitable for high-dimensional combinatorial search spaces (and can be straightforwardly extended to mixed search spaces).
While we focused on using our dictionary-based modeling approach for BO, the implications of our contributions go far beyond BO alone and are relevant for kernel-based methods more generally.
In the context of BO, our dictionary kernel is agnostic to the choice of acquisition function and can be easily applied to settings such as multi-objective and multi-fidelity optimization, and can also be combined with ideas such as trust region optimization.
\ourmethod{} showed strong performance on a diverse set of problems and outperformed several strong baselines such as \casmo{} and \combo{}.

Our work has a few limitations and raises a number of interesting questions that warrant further exploration.
While \ourmethod{} is agnostic to the choice of acquisition function, we only evaluated its performance on single-objective problems.
In addition, rather than randomly generating a diverse set of dictionary elements, we may be able to further improve the dictionary-based GP model by optimizing the dictionary as part of the model fitting procedure.
This may be particularly useful in cases where we have access to historical data that can help us discover suitable dictionaries.
Alternatively, there may be ways of generating the dictionaries in a way that is more aligned with the goal of BO, which is not to fit a globally accurate model but rather identify the location of the global optimum.
Finally, \ourmethod{} may also benefit from recently proposed methods for efficient acquisition function optimization in mixed search spaces~\citep{daulton2022pr}.

\noindent {\bf Acknowledgements.} Aryan Deshwal and Jana Doppa were supported in part by the National Science Foundation grants IIS-1845922 and OAC-1910213.

\FloatBarrier
\bibliographystyle{abbrvnat}
\bibliography{ref.bib}

\clearpage
\onecolumn

\hsize\textwidth\linewidth\hsize\toptitlebar
{\centering{\Large\bfseries Bayesian Optimization over High-Dimensional Combinatorial Spaces \\ via Dictionary-based Embeddings \\ Supplementary Materials \par}}
\bottomtitlebar

\appendix

\section{Affine Representation of the Hamming Embedding}
\label{app:affine}
Our first proposition shows that the Hamming embedding of vectors in $\{0, 1\}^d$ is equivalent to an affine transformation of the $\{-1, 1\}$-encoding of the original binary vector.
\begin{appprop}[Affine Representation]
Let $\dict \in \{0, 1\}^{n\times d}$, $\bold z \in \{0, 1\}^d$.
Then
\begin{equation}
2\hamm_\dict(\bz) = d \bold 1_n - \bar{\dict} \bar \bz,
\end{equation}
where $\bar A_{ij} = 2A_{ij} - 1$ and $\bar z_i = 2z_i - 1 \in \{-1, 1\}$.
\end{appprop}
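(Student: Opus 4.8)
The plan is to reduce the claimed matrix identity to a single coordinate-wise scalar identity and then reassemble by stacking. First I would recall the definition of the embedding, $[\hamm_\dict(\bz)]_i = h(\atom_i, \bz) = \sum_{j=1}^d \mathbf{1}[a_{ij} \neq z_j]$, where $\atom_i$ is the $i$-th row of $\dict$. Since both sides of Eq.~\eqref{eq:affine} are vectors in $\R^m$ indexed by the dictionary rows, it suffices to establish the per-row identity $2\, h(\atom_i, \bz) = d - \bar\atom_i^\top \bar\bz$ for each $i$; collecting these $m$ scalar equations into a vector then gives $2\hamm_\dict(\bz) = d\,\bold 1_m - \bar\dict\,\bar\bz$ directly.

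The one piece of actual content is the elementary observation that the $\{\pm 1\}$-encoding linearizes the disagreement indicator. For binary scalars $a, z \in \{0,1\}$ with $\bar a = 2a - 1$ and $\bar z = 2z - 1 \in \{\pm 1\}$, the product $\bar a\,\bar z$ equals $+1$ exactly when $a = z$ and $-1$ exactly when $a \neq z$, so that $\mathbf{1}[a \neq z] = (1 - \bar a\,\bar z)/2$. I would justify this either by checking the two cases $a = z$ and $a \neq z$, or by expanding $\bar a\,\bar z = (2a-1)(2z-1) = 4az - 2a - 2z + 1$ and noting that for $a, z \in \{0,1\}$ this equals $1 - 2(a + z - 2az) = 1 - 2\,\mathbf{1}[a \neq z]$.

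Summing this identity over the coordinates $j = 1, \dots, d$ yields $h(\atom_i, \bz) = \sum_{j=1}^d (1 - \bar a_{ij}\,\bar z_j)/2 = \tfrac{1}{2}\bigl(d - \bar\atom_i^\top \bar\bz\bigr)$, which rearranges to the desired row identity and completes the argument. I do not expect a genuine obstacle here: the statement is essentially the standard fact that Hamming distance and the inner product of sign vectors are affinely related, and the proof collapses to a one-line computation once the coordinate identity is in place. The only point requiring care is the bookkeeping of the additive constant, namely that the all-ones vector $\bold 1_m$ carries the coefficient $d$ (the ambient dimension, equal to the number of summands in each Hamming distance) and not $m$; getting this constant right is what ties the scalar computation back to the stated vector form.
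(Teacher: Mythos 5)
Your proof is correct and takes essentially the same approach as the paper's: both reduce to the elementary identity $\mathbf{1}[a \neq z] = (1 - \bar a\,\bar z)/2$ and sum over coordinates, with only a bookkeeping difference (you work row-by-row down to scalars, while the paper carries out the same expansion column-wise at the vector level). The only cosmetic mismatch is that the appendix statement writes $n$ for the number of dictionary rows where you write $m$, which is immaterial.
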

\begin{proof}
Let $\atom_i$ be the $i$th column in $\dict$, and $z_i$ the $i$th entry of $\bz$. Then
\[
\begin{aligned}
\hamm_\dict( \bold x) &= \sum_{i}^d (\neg \atom_i z_i + \atom_i \neg z_i) \\
	&= \sum_{i}^d ([\bold 1_n - \atom_i]  z_i + \atom_i  [1 - z_i]) \\
	&= \sum_{i}^d (\bold 1_n z_i - 2\atom_i z_i + \atom_i) \\
	&= \bold 1_n (\bold 1_d^\top \bz) - \dict (2\bz-1) \\
	&= [(\bold 1_n \bold 1_d^\top) (2\bz - 1) + d \bold 1_n ]/2 - \dict (2\bz-1) \\
	&= [d \bold 1_n - (2\dict - \bold 1_{n, d}) (2\bz - \bold 1_d)] / 2 \\
	&= (d \bold 1_n - \bar{\dict} \bar \bz)/ 2. \\
\end{aligned}
\]
Multiplying both sides by two finishes the proof.
\end{proof}

Plugging the affine representation into the embedded distance formula yields
\[
\begin{aligned}
   2\|\hamm_\dict(\bz) - \hamm_\dict(\bz')\|
    & = \| (d \bold 1_n - \bar{\dict} \bar \bz) - (d \bold 1_n - \bar{\dict} \bar \bx')\| \\
    & = \| \bar{\dict} \bar \bz - \bar{\dict} \bar \bz'\| \\
    & = \| \bar{\dict} \bar{\br} \|, \\
\end{aligned}
\]
where $\bar \br = \bar \bz - \bar \bz'$.
That is, the distance computation only relies on a {\it linear} projection of the difference vector $\bar \br$ of the $\{-1, 1\}$-encoding of the centered input vectors.
As a further consequence, if the wavelet dictionary of Section~\ref{app:sec:BinaryWavelet} is chosen, the embedding is a sub-sampled Hadamard transform up to a constant shift,
which we could implement by means of the Fast Hadamard Transform in $d \log d$ time.

Another consequence of the affine representation is the that the Hamming distance $h$ can first be written as the Euclidean distance of the shifted inputs $\bar \bz$.
Further, we can use the fact that $\|\bar \bz\|_2^2 = d$ to write
\[
\begin{aligned}
2h(\bz, \bz') &= d - \bar \bz^\top \bar \bz'
    = (\|\bar \bz\|_2^2  + \|\bar \bz'\|_2^2) / 2  - \bar \bz^\top \bar \bz'
    = \|\bar \bz - \bar \bz'\|_2^2 / 2.
\end{aligned}
\]
And thus the exponentiated negative Hamming distance can be seen as an RBF kernel:
\begin{equation}
\label{eq:hamming_rbf}
\exp(-2h(\bz, \bz')) = \exp(-\|\bar \bz - \bar \bz'\|_2^2 / 2).
\end{equation}

\section{Search Space Cardinality Reduction}
\label{app:cardinality}
This section derives a bound on the cardinality of the space of embedded inputs $\hamm_\dict(\bz)$.
Using the dictionary kernel is equivalent to applying a canonical kernel
to the transformed search space $\calS = \{ \hamm_\dict( \bz) \ | \ \bz \in \{0, 1\}^d\}$.
Therefore, generic convergence and regret bounds for finite search spaces apply.
However, while generic linear embeddings generally reduce the dimensionality of the search space,
they do not necessarily lead to a reduction in the cardinality $|\calS|$,
a key quantity in regret bounds for Bayesian optimization in finite search spaces.
Indeed, the next result shows that even for a one-dimensional Gaussian random projection, the full cardinality is preserved.
\begin{appprop}
    Define $\calS_{\bold a} = \{\bold a^\top \bz \ | \ \bz \in \{\pm 1\}^d\}$, and let
    $\atom \sim \mathcal{N}(\bold 0, \bold{I}_{d})$.
    Then $|\calS_{\atom}| = 2^d$ almost surely.
\end{appprop}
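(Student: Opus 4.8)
The plan is to show that the linear map $\bz \mapsto \atom^\top \bz$ is injective on $\{\pm 1\}^d$ almost surely; since $\{\pm 1\}^d$ has exactly $2^d$ elements, injectivity immediately yields $|\calS_{\atom}| = 2^d$, the reverse inequality $|\calS_{\atom}| \le 2^d$ being trivial. First I would observe that injectivity fails precisely when there exist distinct $\bz, \bz' \in \{\pm 1\}^d$ with $\atom^\top \bz = \atom^\top \bz'$, equivalently $\atom^\top(\bz - \bz') = 0$. Writing $\mathbf{v} = \bz - \bz'$, the distinctness of $\bz$ and $\bz'$ guarantees $\mathbf{v} \neq \mathbf{0}$, so the collision event is that $\atom$ lies in the orthogonal complement of a fixed nonzero vector.

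Next I would note that for any fixed nonzero $\mathbf{v} \in \R^d$, the set $\{\atom \in \R^d : \atom^\top \mathbf{v} = 0\}$ is a $(d-1)$-dimensional hyperplane and therefore has Lebesgue measure zero. Since $\atom \sim \mathcal{N}(\mathbf{0}, \mathbf{I}_d)$ is absolutely continuous with respect to Lebesgue measure, this gives $\Pr[\atom^\top \mathbf{v} = 0] = 0$ for each such $\mathbf{v}$.

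Finally I would close the argument with a union bound. Because the domain $\{\pm 1\}^d$ is finite, there are at most $\binom{2^d}{2}$ unordered pairs $\{\bz, \bz'\}$, so the event that \emph{some} pair collides is a finite union of probability-zero events and hence itself has probability zero. Consequently, with probability one no two distinct sign vectors map to the same value, the map is injective, and $|\calS_{\atom}| = 2^d$.

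Since the entire argument reduces to the measure-zero property of hyperplanes under an absolutely continuous law, I do not expect a genuine obstacle here. The only points needing a modicum of care are verifying that each difference vector $\mathbf{v}$ is nonzero (which is exactly where distinctness enters) and that the union bound is legitimate (which holds precisely because $\{\pm 1\}^d$ is finite, so only finitely many null events are combined).
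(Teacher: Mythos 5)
Your argument is correct and matches the paper's proof essentially verbatim: both reduce injectivity to the event $\atom^\top(\bz-\bz')=0$ for a nonzero difference vector, observe that this confines $\atom$ to a $(d-1)$-dimensional hyperplane which is a null set under the Gaussian measure, and conclude via subadditivity over the finitely many pairs. No gaps.
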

\begin{proof}
    Given $\bz, \bz'\in \{-1, 1\}^d$,
    suppose $\bz \neq \bz'$ and $\atom^\top \bz = \atom^\top \bz'$.
    Therefore, $\atom^\top (\bz - \bz') = 0$.
    Since $\bz - \bz' \neq 0$, this can only hold if $\atom \ \bot \ (\bz - \bz')$.
    But $\{\atom \ | \ \atom \ \bot \ (\bz - \bz')\}$ is $(d-1)$-dimensional,
    and therefore a nullset under the Gaussian measure in $d$ dimensions~\citep{rudin1974real}.
    Therefore, $\atom^\top (\bz - \bz') \neq 0$ almost surely.
    Since the set $\{-1, 1\}^d$ has finite cardinality $2^d$,
    and by the subaddativity of any probability measure $\mu$,
    \[
    \mu \left( \bigcup_{\bz, \bz' \in \{-1, 1\}^d} \{\bold a \ | \ \bold a \ \bot \ (\bz - \bz') = 0 \} \right )
    \leq
    \sum_{\bz, \bz' \in \{-1, 1\}^d} \mu \left(\{\bold a \ | \ \bold a \ \bot \ (\bz - \bz') = 0 \} \right )
    = 0.
    \]
    Thus, all distinct $\bz \in \{-1, 1\}^d$ map to distinct values $\bold a^\top \bz$ almost surely,
    so $|\calS_{\bold a}| = |\{-1, 1\}|^d = 2^d$.
\end{proof}

The following proposition sheds light on the implied cardinality of the embedded search space
as a function of the number of embedding dimensions $n$, the input dimensionality $d$,
and a measure of the variability of the dictionary rows.

\begin{appprop}[Embedding Cardinality]
\label{apppropr:cardinality}
    Let $\dict \in \{0, 1\}^{m \times d}$.
    Then the cardinality of the embedded search space $\calS_{\dict}$ can be bounded above by
    \[
        \left | \calS_{\dict} \right|
        \leq \left[ (\mu_{\dict}+1)(d+1-\mu_{\dict}) \right]^{\lfloor m/2 \rfloor} (d+1)^{m \Mod 2}
    \]
    where $\mu_{\dict} = \max_{i,j} \ \max(h(\atom_i, \atom_j), h(\neg \atom_i, \atom_j))$,
    and $h$ is the Hamming distance.
\end{appprop}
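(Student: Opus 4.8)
The plan is to start from the affine representation of Proposition~\ref{prop:affine}, which gives $2\hamm_\dict(\bz) = d\bold 1_m - \bar\dict\bar\bz$ and hence a bijection between $\calS_\dict$ and the image $\{\bar\dict\bar\bz : \bz \in \{0,1\}^d\}$. So it suffices to count the distinct signed-sum vectors $\bar\dict\bar\bz$ as $\bar\bz$ ranges over $\{\pm 1\}^d$. To expose the exponent $\lfloor m/2\rfloor$, I would group the $m$ rows of $\bar\dict$ into $\lfloor m/2\rfloor$ disjoint pairs, leaving one row unpaired when $m$ is odd, and bound the number of distinct full embeddings by controlling, one pair at a time, how many distinct values the two corresponding coordinates of $\bar\dict\bar\bz$ can take.

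For a single pair of rows $\atom_i,\atom_j$ I would partition the $d$ coordinates into the set $A$ where $\atom_i,\atom_j$ agree (of size $d-h(\atom_i,\atom_j)$) and the set $D$ where they disagree (of size $h(\atom_i,\atom_j)$). Letting $x$ and $y$ count the disagreements of $\bz$ with $\atom_i$ on $A$ and $D$ respectively, one has $h(\atom_i,\bz)=x+y$ and $h(\atom_j,\bz)=x+h(\atom_i,\atom_j)-y$, so the pair $(h(\atom_i,\bz),h(\atom_j,\bz))$ is in bijection with $(x,y)$ and therefore takes exactly $(d-h(\atom_i,\atom_j)+1)(h(\atom_i,\atom_j)+1)$ distinct values. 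Equivalently, this is the lattice-point count of the (correct-parity) rectangle cut out by the sum coordinate $\bar\atom_i^\top\bar\bz+\bar\atom_j^\top\bar\bz$, supported on $A$, and the difference coordinate $\bar\atom_i^\top\bar\bz-\bar\atom_j^\top\bar\bz$, supported on $D$. The target factor $(\mu_\dict+1)(d+1-\mu_\dict)$ is then the natural hope: since $\mu_\dict=\max_{i,j}\max(h(\atom_i,\atom_j),d-h(\atom_i,\atom_j))$, every pair obeys both $h(\atom_i,\atom_j)\le\mu_\dict$ and $d-h(\atom_i,\atom_j)\le\mu_\dict$, while the unpaired coordinate (if $m\Mod 2=1$) always contributes at most $d+1$ values, yielding the trailing $(d+1)^{m\Mod 2}$ factor.

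The hard part will be the final aggregation, and I do not expect a naive product of the per-pair counts to suffice. As a function of the pairwise distance, the exact count $(d-h+1)(h+1)$ is \emph{maximized} for balanced pairs ($h\approx d/2$) and equals $(\mu_\dict+1)(d+1-\mu_\dict)$ only when a pair attains the extreme coherence $\mu_\dict$; a pair more balanced than the worst one has a strictly larger per-pair count, so the product of the marginal counts can overshoot the claimed bound even though the bound itself still holds (one can check this on small dictionaries containing one highly coherent pair together with several balanced pairs). The crux is therefore to prove that the \emph{joint} count is sub-multiplicative relative to these marginals: the coordinates of all pairs read off the same $\bar\bz$, and these correlations must be shown to shrink the image below the product. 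I would try to establish this either by a direct count of the lattice points of the zonotope $\bar\dict[-1,1]^d$, whose extent along each paired sum/difference direction is governed by $\mu_\dict$, or by an induction on the number of pairs that tracks the \emph{conditional} (not marginal) number of new pair-values; the delicate point in the latter is that deleting a pair can only decrease $\mu_\dict$, which moves the inductive bound in the unfavorable direction, so the conditioning has to be organized to exploit the global coherence constraint rather than peeling off a single worst pair.
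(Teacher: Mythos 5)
Your reduction to the two-row count is correct and matches the paper's: for a pair $(\atom_i,\atom_j)$ with $h=h(\atom_i,\atom_j)$, the two corresponding embedding coordinates take exactly $(h+1)(d+1-h)=g(\mu_{ij})$ values, where $g(x)=(x+1)(d+1-x)$ and $\mu_{ij}=\max(h(\atom_i,\atom_j),h(\neg\atom_i,\atom_j))$. The gap is the aggregation step that you yourself flag as ``the crux'' and leave open, and your diagnosis of why the naive product fails is exactly right: $g$ is decreasing on $[\lceil d/2\rceil,d]$ and every $\mu_{ij}$ lies in that interval, so $\mu_{ij}\le\mu_\dict$ gives $g(\mu_{ij})\ge g(\mu_\dict)$ --- the per-pair counts are bounded \emph{below}, not above, by the claimed factor. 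Be aware that the paper's own proof is precisely this naive product (``apply the bound above to $\lfloor m/2\rfloor$ pairs''), with $\mu_\dict$ silently substituted for each $\mu_{ij}$ in the wrong direction; it contains no trace of the sub-multiplicativity argument you are hoping to find.

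Unfortunately that missing step cannot be supplied, because the inequality as stated is false, so neither the zonotope count nor the conditional induction can close the gap. Concretely, take $m=4$, $d=8q$, $\atom_2=\neg\atom_1$, and choose $\atom_1,\atom_3,\atom_4$ so that each sign pattern in $\{0,1\}^3$ occupies a block of $q$ coordinates. The pair $(\atom_1,\atom_2)$ forces $\mu_\dict=d$, so the claimed bound is $\left[(d+1)\cdot 1\right]^2=(d+1)^2$, while $h(\atom_2,\bz)=d-h(\atom_1,\bz)$ means $|\calS_\dict|$ equals the number of distinct triples $(h(\atom_1,\bz),h(\atom_3,\bz),h(\atom_4,\bz))$. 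Varying $\bz$ away from $\atom_1$ only on the three blocks with patterns $(0,0,0)$, $(0,1,0)$, $(0,0,1)$, by $\alpha,\beta,\gamma\in[0,q]$ flips respectively, produces (up to an affine shift) the triple $(\alpha+\beta+\gamma,\ \alpha-\beta+\gamma,\ \alpha+\beta-\gamma)$, an injective image of $[0,q]^3$; hence $|\calS_\dict|\ge(q+1)^3=(d/8+1)^3$, which exceeds $(d+1)^2$ already at $d=512$. What your per-pair analysis does legitimately prove is the bound $\prod_k g(\mu_{P_k})\cdot(d+1)^{m\Mod 2}$ for any fixed pairing $\{P_k\}$ of the rows, and therefore the uniform version obtained by replacing $\max_{i,j}$ with $\min_{i\ne j}$ in the definition of $\mu_\dict$; I would state and prove one of those instead.
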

\begin{proof}
First, we consider one anchor point.
Let $d \in \N$, and $\atom \in \bin^d$. Then for any $\bz \in \bin^d$, $\hamm(\atom, \bz) \in \N$
and
\[
0 \leq \hamm_\atom(\bz) = h(\atom, \bz) = \sum_i \delta(a_i, z_i) \leq d,
\]
so $\hamm_\atom(\bz) \in [d]$ and $|\calS| = d+1$.
Na\"ively generalizing this to $n$ dimensions would yield $|\calS| \leq (d+1)^n$.
However, the true cardinality is much lower, because having certain elements in common with one anchor point will restrict the corresponding dimensions to be the same with another anchor point. The next paragraph will make this intuition precise.

Next, we consider two anchor points.
Let $d \in \N$, and $\atom_1, \atom_2 \in \bin^d$. Then for any $\bz \in \bin^d$,
Suppose $\dict = [\atom_1, \atom_2]$, and let
\[
s = \{i \in [d] \ | \ [\atom_1]_i = [\atom_2]_i\}
\]
be the set of indices for which the anchors have take the same values,
and $\neg s = [d] \backslash s$, $|s| = k$.
Then we can express the embedding as
\[
\begin{aligned}
\hamm_\dict(\bz)
&= \hamm_{\dict_s}(\bz_s) + \hamm_{\dict_{\neg s}}(\bz_{\neg s}) \\
&= [h(\atom_{1, s}, \bz_{s}), h(\atom_{2, s}, \bz_{s})] + [h(\atom_{1, \neg s}, \bz_{\neg s}), h(\atom_{2, \neg s}, \bz_{\neg s})] \\
&= [h(\atom_{1, s}, \bz_{s}), h(\atom_{1, s}, \bz_{s})] + [h(\atom_{1, \neg s}, \bz_{\neg s}), h(\neg \atom_{1, \neg s}, \bz_{\neg s})] \\
&= [z_s, z_s] + [h(\atom_{1, \neg s}, \bz_{\neg s}), (d - h(\atom_{1, \neg s}, \bz_{\neg s}))] \\
&= [z_s, z_s] + [z_{\neg s}, (d - z_{\neg s})], \\
\end{aligned}
\]
where $z_s = h(\atom_{1, s}, \bz_s)$.
Now, $n_s = h(\atom_{s}, \bz_{s}) \in [k]$ and $n_{\neg s} \in [d-k]$.
The cardinality of the embedding space is exactly $(k+1) (d+1-k)$,
because a subset of $d-k$ variables always take the same values in both dimensions,
and the remaining $k$ move linearly independently to the first.
Differentiating the cardinality with respect to $k$:
\[
\begin{aligned}
\frac{d}{dk} (k+1) (d+1-k)
&= d -2k \leq 0 \qquad \text{for} \qquad \lceil d/2 \rceil \leq k \leq d,
\end{aligned}
\]
we see that the cardinality is an even symmetric function around $k = \lceil d/2 \rceil$, where it achieves its maximum.
This inspires the definition of the coherence-like quantity $\mu_\bA$,
whose value is monotonically related to the cardinality equation above,
and satisfies $\lceil d/2 \rceil \leq \mu_\bA \leq d$.
Further, note that for two anchor points, $\mu_\bA =
\max(h(\neg \atom_1, \atom_2), h(\atom_1, \atom_2))
= \max(k, d-k)$.
For $m$ row, $\mu_\bA$ is an upper bound on any pairwise similarity between all rows and their negations.
Therefore, we can apply the bound above to $\lfloor m/2 \rfloor$ pairs and have at most $(d+1)$ more values from the remaining dimension if $m$ is odd.
\end{proof}

We are now ready to combine our analysis of the cardinality of the embedded search space $\calS_\bA$ with the general result of
\cite{ucb} to get an improved regret bound for \ourmethod{}.
We will use $\mathcal{O}^*$ to denote $\mathcal{O}$ with log-factors suppressed.
\begin{appthm}
\label{appthm:regret}
  Let $\dict$ have $m$ rows, $\delta \in (0, 1)$, and $\beta_t = 2 \log(|\calS_\dict|t^2\pi^2/6\delta)$.
  Then the cumulative regret associated with running UCB for a sample $f$ of a zero-mean GP with kernel function $k_\ourmethod{}(\bz, \bz') = k_{\text{base}}(\hamm_\dict(\bz), \hamm_\dict(\bz'))$, is upper-bounded by $\mathcal{O}^{*}(\sqrt{T \gamma_T m})$ with probability $1 - \delta$,
  where $\gamma_T$ is the maximum information gain of $k_{\text{base}}$.
\end{appthm}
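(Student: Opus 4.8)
The plan is to reduce the statement to a direct application of the finite-domain \texttt{GP-UCB} theorem of \citet{ucb} (their Theorem~1), taking the decision set to be the embedded search space $\calS_\dict$ rather than $\{0,1\}^d$, and then substituting the cardinality bound of Prop.~\ref{prop:cardinality}. The whole argument hinges on observing that the relevant cardinality in the \citet{ucb} confidence intervals is $|\calS_\dict|$, which Prop.~\ref{prop:cardinality} bounds polynomially in $d$.

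First I would argue that the GP with kernel $k_\ourmethod{}$ factors through the embedding. Since $k_\ourmethod{}(\bz,\bz') = k_{\text{base}}(\hamm_\dict(\bz), \hamm_\dict(\bz'))$, any two inputs $\bz,\bz'$ with $\hamm_\dict(\bz)=\hamm_\dict(\bz')$ satisfy $k_\ourmethod{}(\bz,\bz') = k_\ourmethod{}(\bz,\bz) = k_\ourmethod{}(\bz',\bz')$, so they are perfectly correlated and receive an identical posterior mean and variance at every iteration. Consequently the process is determined by its values on the finite set of \emph{distinct} embeddings, whose size is exactly $|\calS_\dict|$. This is the key step: the union bound underlying the confidence intervals in \citet{ucb} need only range over $\calS_\dict$, not over all $2^d$ binary vectors, so the cardinality entering their choice of $\beta_t$ is $|\calS_\dict|$ rather than $2^d$.

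Next I would invoke the finite-domain bound of \citet{ucb} with $D = \calS_\dict$ and $\beta_t = 2\log(|\calS_\dict| t^2 \pi^2/6\delta)$, which gives $R_T \leq \sqrt{C_1\, T\, \beta_T\, \gamma_T}$ with probability at least $1-\delta$, where $C_1$ is a constant depending only on the observation noise variance and $\gamma_T$ is the maximum information gain. Because the kernel matrices of $k_\ourmethod{}$ on any set of inputs coincide with those of $k_{\text{base}}$ evaluated on the corresponding embeddings, the quantity $\gamma_T = \log|\bold I + \sigma^{-2}\bold K_T|$ is exactly the maximum information gain of $k_{\text{base}}$ over $\calS_\dict$, as stated in the theorem.

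Finally I would translate the cardinality bound into the $\sqrt m$ scaling. Taking logarithms in Prop.~\ref{prop:cardinality} and using $d/2 \le \mu_\dict \le d$, each factor obeys $(\mu_\dict+1)(d+1-\mu_\dict) = \mathcal{O}(d^2)$, so $\log|\calS_\dict| = \mathcal{O}(m\log d)$ and hence $\beta_T = \mathcal{O}(m\log d + \log T + \log(1/\delta))$. Since $R_T$ scales with $\sqrt{\beta_T}$, substituting into $R_T \leq \sqrt{C_1 T \beta_T \gamma_T}$ and absorbing the $\log d$, $\log T$, and $\log(1/\delta)$ terms into the $\mathcal{O}^{*}$ notation yields $R_T = \mathcal{O}^{*}(\sqrt{T\gamma_T m})$. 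The main obstacle I anticipate is not the computation but justifying the factoring step rigorously --- i.e. that it is legitimate to collapse the $2^d$ inputs down to $|\calS_\dict|$ representatives inside the \citet{ucb} union bound --- since this is precisely what converts the cardinality reduction of Prop.~\ref{prop:cardinality} into the improved $\sqrt m$ (rather than $\sqrt d$) dimensionality dependence.
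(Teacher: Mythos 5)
Your proposal is correct and follows essentially the same route as the paper's proof: reduce to Theorem~1 of \citet{ucb} applied to the finite embedded search space $\calS_\dict$ with the information gain of $k_{\text{base}}$, then substitute the cardinality bound of Prop.~\ref{prop:cardinality} to get $\log|\calS_\dict| = \mathcal{O}^*(m)$ and hence $\mathcal{O}^*(\sqrt{T\gamma_T m})$. The only difference is that you spell out the factoring-through-the-embedding step (perfectly correlated inputs sharing an embedding) that the paper simply asserts as ``\ourmethod{} is equivalent to running canonical BO on the transformed search space,'' which is a welcome bit of extra rigor rather than a deviation.
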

\begin{proof}
\ourmethod{} is equivalent to running canonical Bayesian optimization with the $k_{\text{base}}$ kernel
on the transformed search space $\calS_{\dict} = \bigl\{ \hamm_\dict(\bz) \ | \ \bz \in \{0, 1\}^d\bigr\}$.
Since $\calS_\dict$ is finite, Theorem~1 of \cite{ucb} applies,
with $\calS_\dict$ as the search space and
the information gain $\gamma_T$ of the base kernel $k_{\text{base}}$,
giving us a regret bound of $\mathcal{O}^*(\sqrt{T \gamma_T \log(|\calS_\dict}|))$.
Applying the cardinality bound of Proposition~\ref{apppropr:cardinality},
we get $\mathcal{O}^*(\log(|\calS_\dict|))
= \mathcal{O}^*(\log([(\mu_\bA + 1)
(d + 1 - \mu_\bA)]^{\lfloor m/2 \rfloor})
=
\mathcal{O}^*(m)$.
Plugging this cardinality bound into the generic asymptotic bound finishes the proof.
\end{proof}

\section{Dictionary Construction Approach via Binary Wavelets}
\label{app:sec:BinaryWavelet}

In this section, we describe a randomized dictionary construction approach based on Binary wavelet transform for binary spaces $\mathcal{Z}$=$\{0,1\}^d$.
At a high-level, this approach has two key steps.
First, we employ a deterministic recursive procedure to construct a pool of basis vectors over binary structures. Second, we randomly select a subset of $k$ diverse vectors as our dictionary $\dict$.
We explain the details of these two steps below.

\paragraph{Recursive algorithm for binary wavelet design.} The effectiveness of surrogate model critically depends on the dictionary employed to embed the discrete inputs.
We define our dictionary matrix  $\dict_{[k \times d]}$ as a subsampled ($k$-sized) set of basis vectors over the binary space $\{0,1\}^d$ which is characterized by the constituent vectors varying over a range of sequencies.
The notion of \emph{sequency} is defined as the number of changes from 1 to 0 and vice versa (analogous to the notion of frequency in Fourier transforms).

Multi-resolution {\em wavelets} \citep{mallat1989theory} are effective well-known techniques for studying real-valued signals at different scales by applying a set of orthogonal transforms to the data.
Specifically, binary wavelet transforms \citep{swanson1996binary} allow us to study data defined over binary spaces (concretely $\{0, 1\}^d$ with {\em mod 2} arithmetic) at different scales. Hence, they are a natural choice for constructing our pool of basis vectors.

We construct the randomized dictionary $\dict$ by randomly sampling from a deterministic binary wavelet transform matrix $\bold B_d$ generated by a recursive procedure as described in~\citep{swanson1996binary} (where such matrices were used for image compression).
The key idea behind the procedure is to recursively generate binary matrices whose vectors are ordered in terms of increasing sequency. Algorithm \ref{alg:bin_wavelet} provides the pseudo-code of this recursive method.

Given $\bold B_d$, the dictionary $\dict$ is constructed by subsampling row vectors from $\bold B_d$ i.e. $\dict=\bold P \bold B_d$ where $\bold P$ randomly samples $m$ vectors uniformly.
The random sampling using $\bold P$ picks vectors that are spread over a range of sequencies in contrast to the alternative choice of picking top-$m$ rows from $\bold B_d$ which restricts the chosen vectors to limited range of sequencies.
Our experiments demonstrate the effectiveness of randomized dictionaries over the top-$m$ alternative.

{\bf Remark.} Following Proposition~\ref{prop:affine}, this choice of dictionary is equivalent to the Subsampled Randomized Hadamard Transform (SRHT) for constructing low-dimensional embeddings in {\em continuous input spaces} where the embeddings are subsampled projections of Hadamard transforms, i.e., $\bf{\hat{x}} = \bold P \bold H_n \bold D \bx$ where $\bold H_n$  is the Hadamard matrix of order $n$, $\bold D$ is a diagonal matrix with random entries on the diagonal from $\{1,-1\}$ and $\bold P$ defined similarly as above. Importantly, this dictionary also minimizes coherence-type measure $\mu_\bA$ introduced in proposition 3.

\begin{figure*}[!ht]
    \centering
    \subfloat{
    \includegraphics[width=0.4\textwidth]{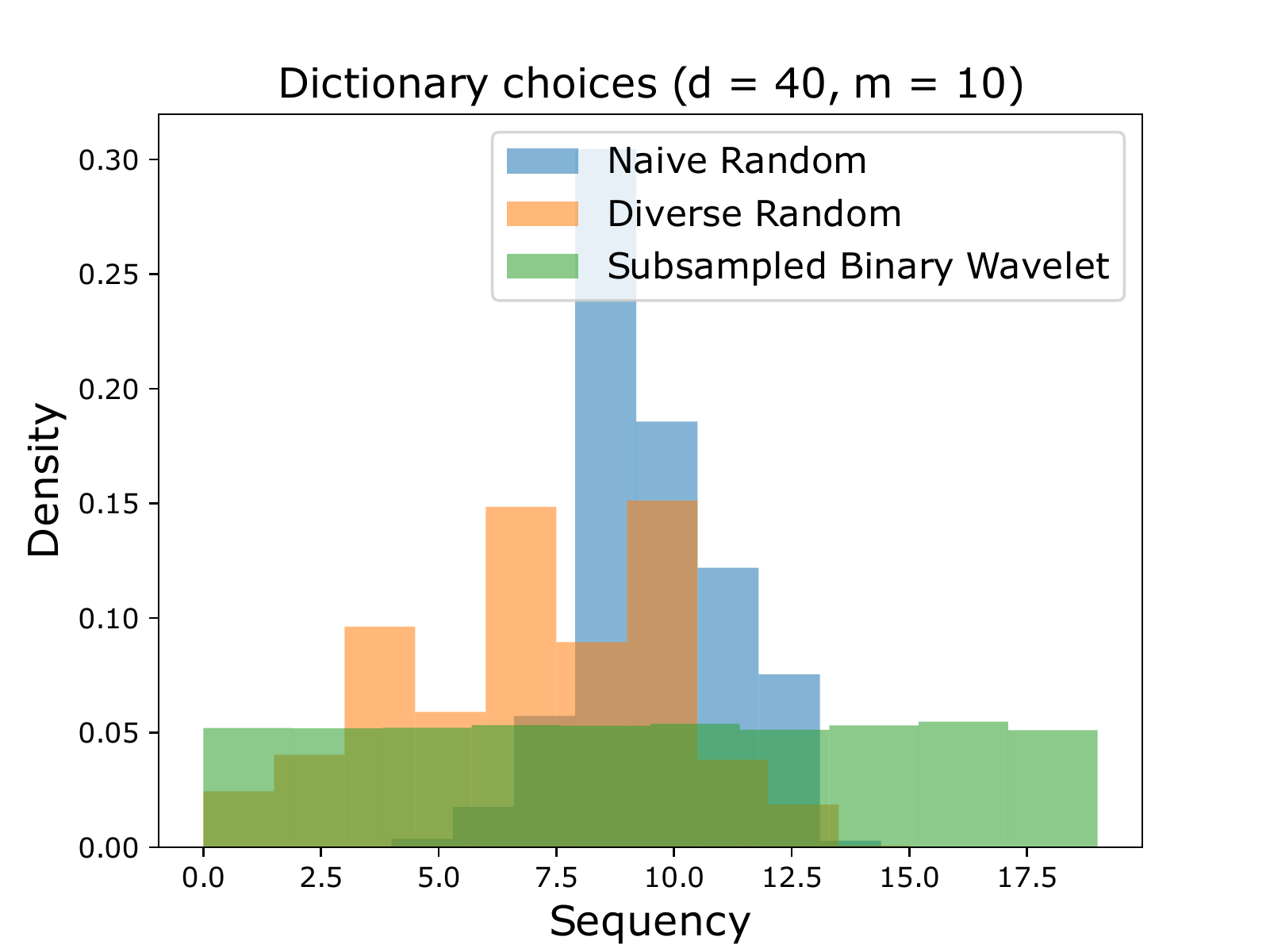}
    \label{fig:qap}
    }\quad
    \subfloat{
    \includegraphics[width=0.4\textwidth]{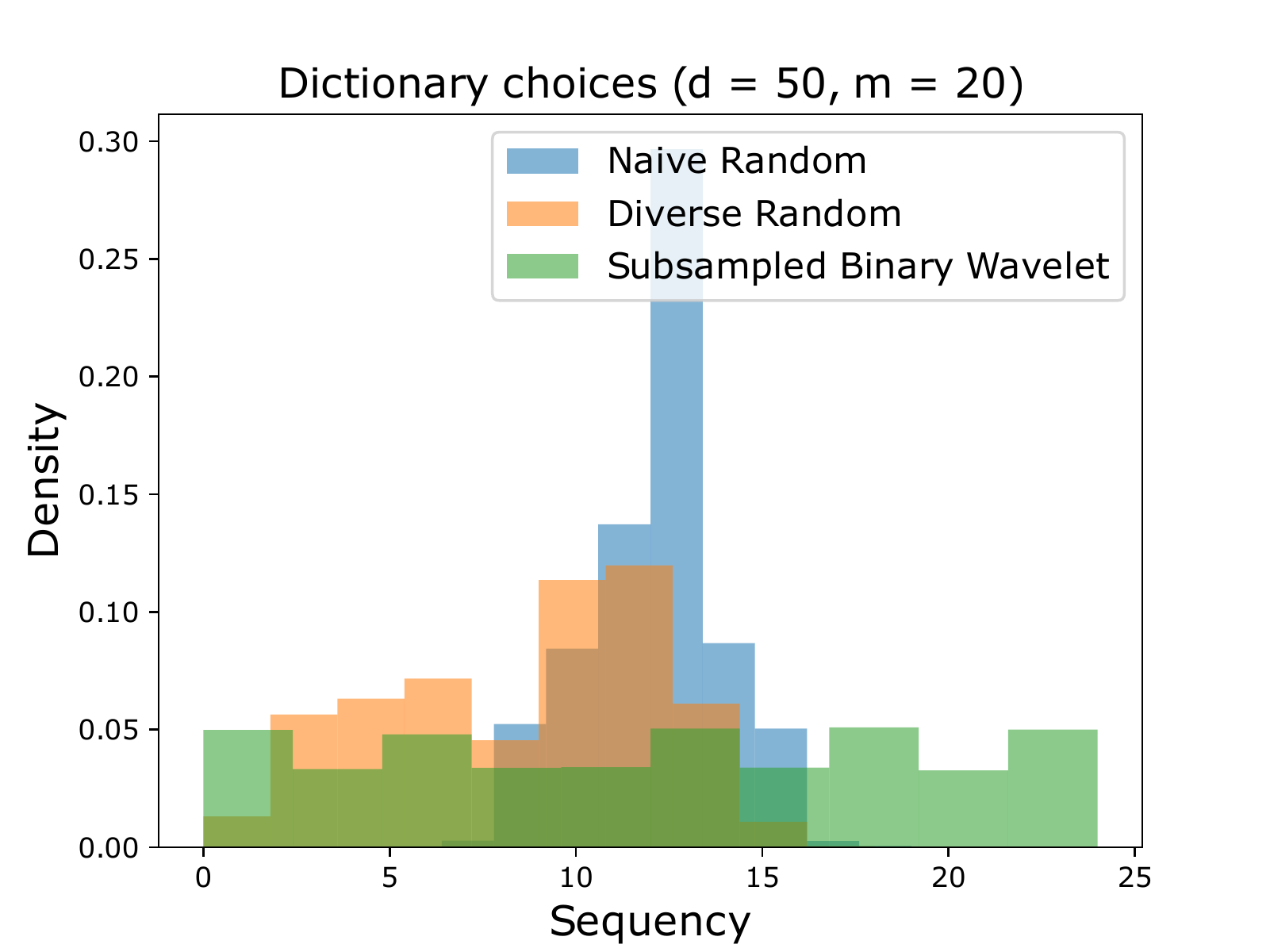}
    \label{fig:pl}
    }
    \caption{
        We randomly generated a large number ($1000$) of dictionaries.
        This shows the similarity of the two dictionary choices (binary wavelet) and (diverse parameter) in terms of the sequency characterization. The notion of sequency allows us to empirically see the similarity between these two better choices of the dictionary.
    }
    \label{fig:hist_densities}
\end{figure*}

\begin{figure*}[!ht]
    \centering
    \subfloat[LABS ($50$ binary parameters)]{
    \includegraphics[width=0.455\textwidth]{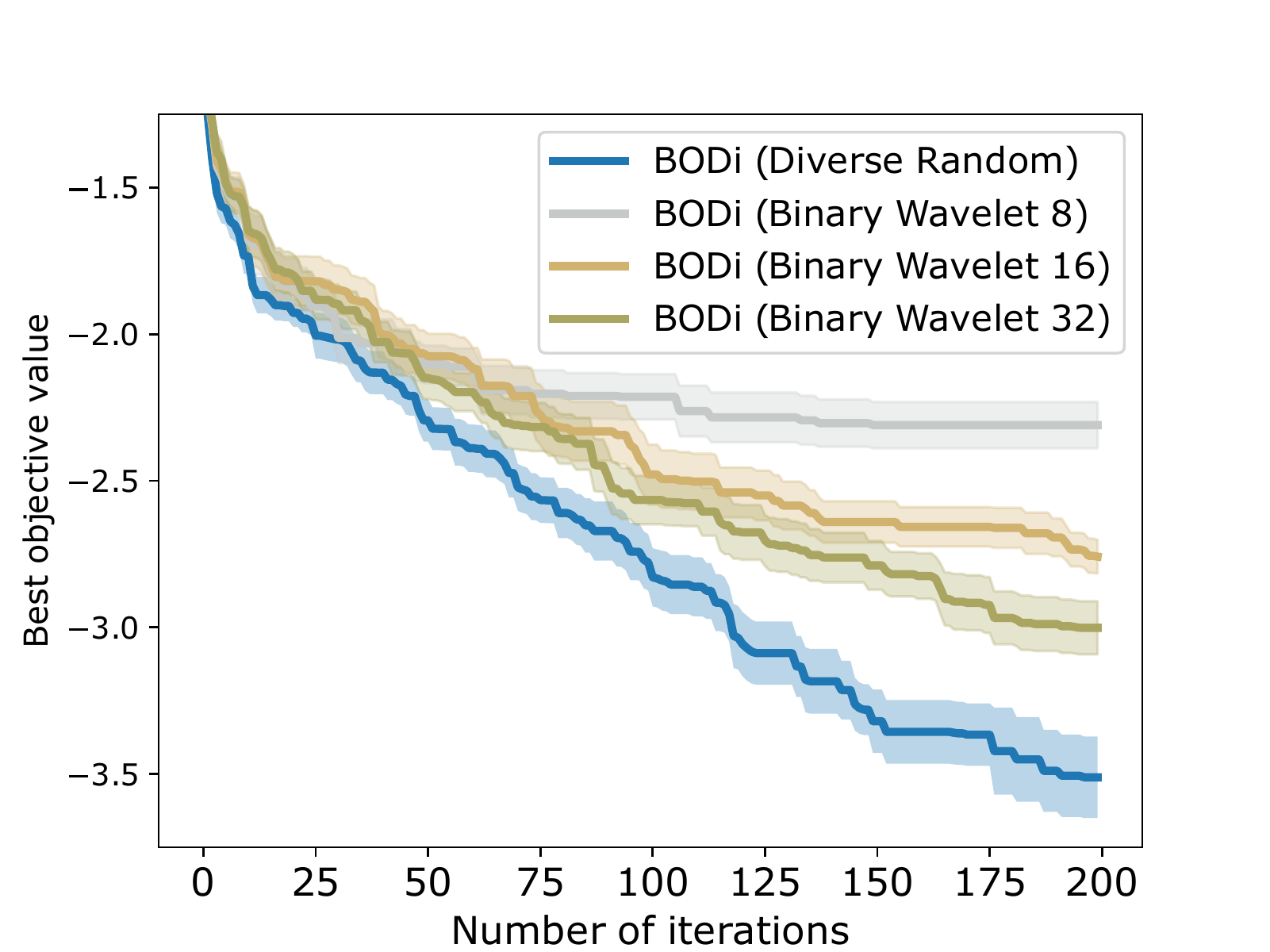}
    \label{fig:binwavelet_labs}
    }\quad
    \subfloat[MaxSAT ($60$ binary parameters)]{
    \includegraphics[width=0.450\textwidth]{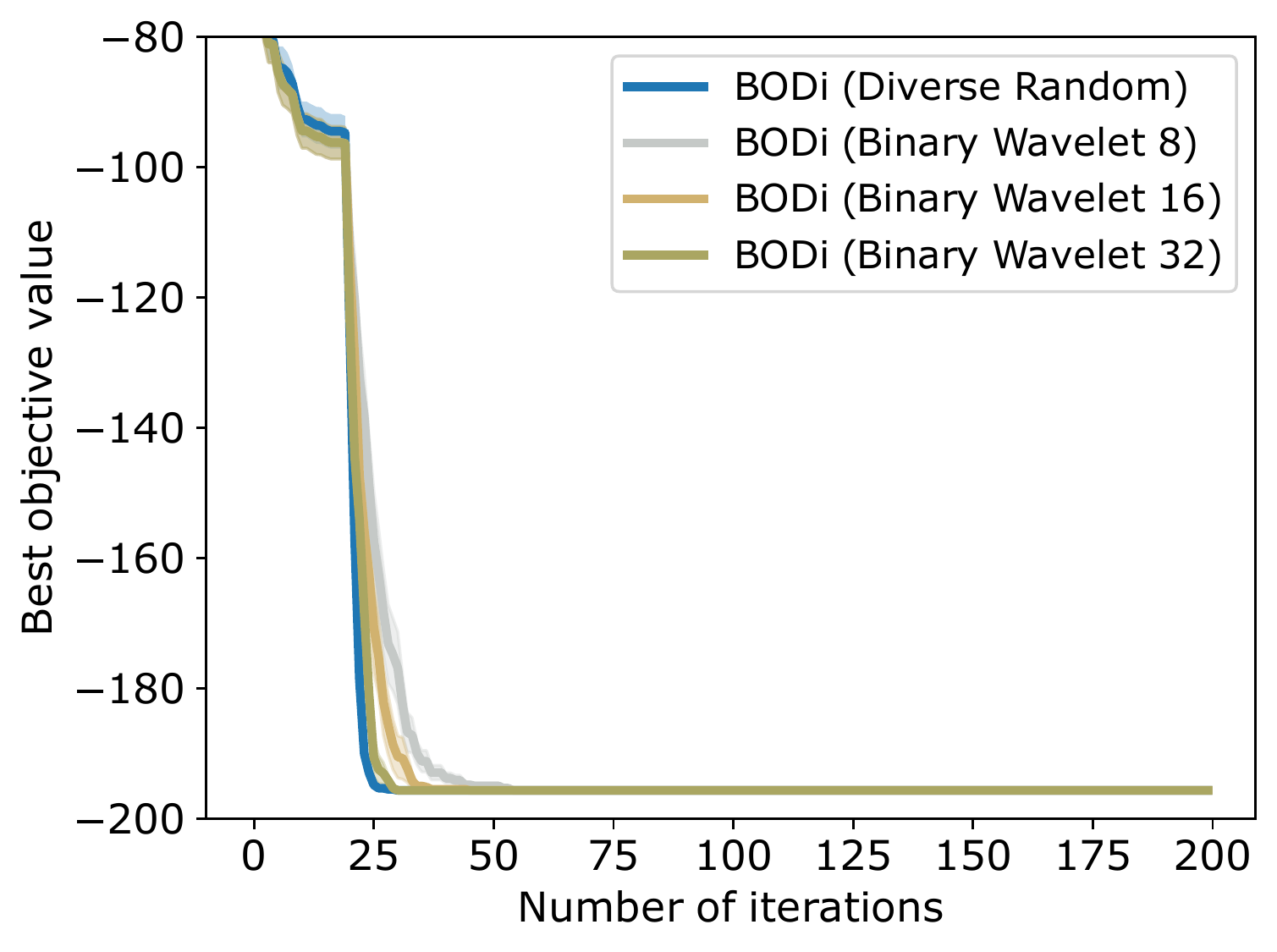}
    \label{fig:binwavelet_maxsat60}
    }\quad
    \subfloat[Ackley ($50$ binary \\ parameters, 3 continuous parameters)]{
    \includegraphics[width=0.45\textwidth]{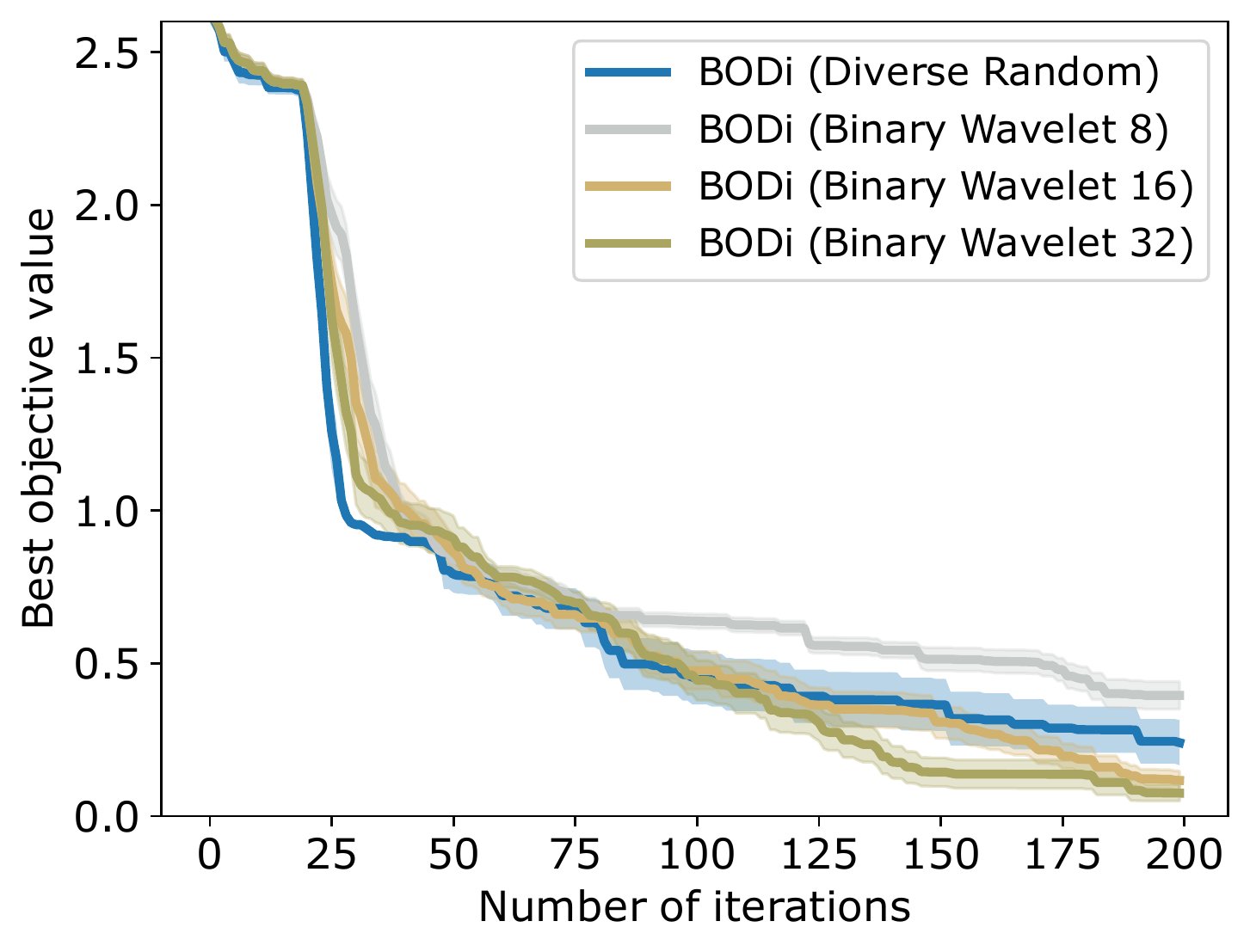}
    \label{fig:binwavelet_ackley}
    }\quad
    \subfloat[SVM ($50$ binary \\ parameters, 3 continuous parameters)]{
    \includegraphics[width=0.45\textwidth]{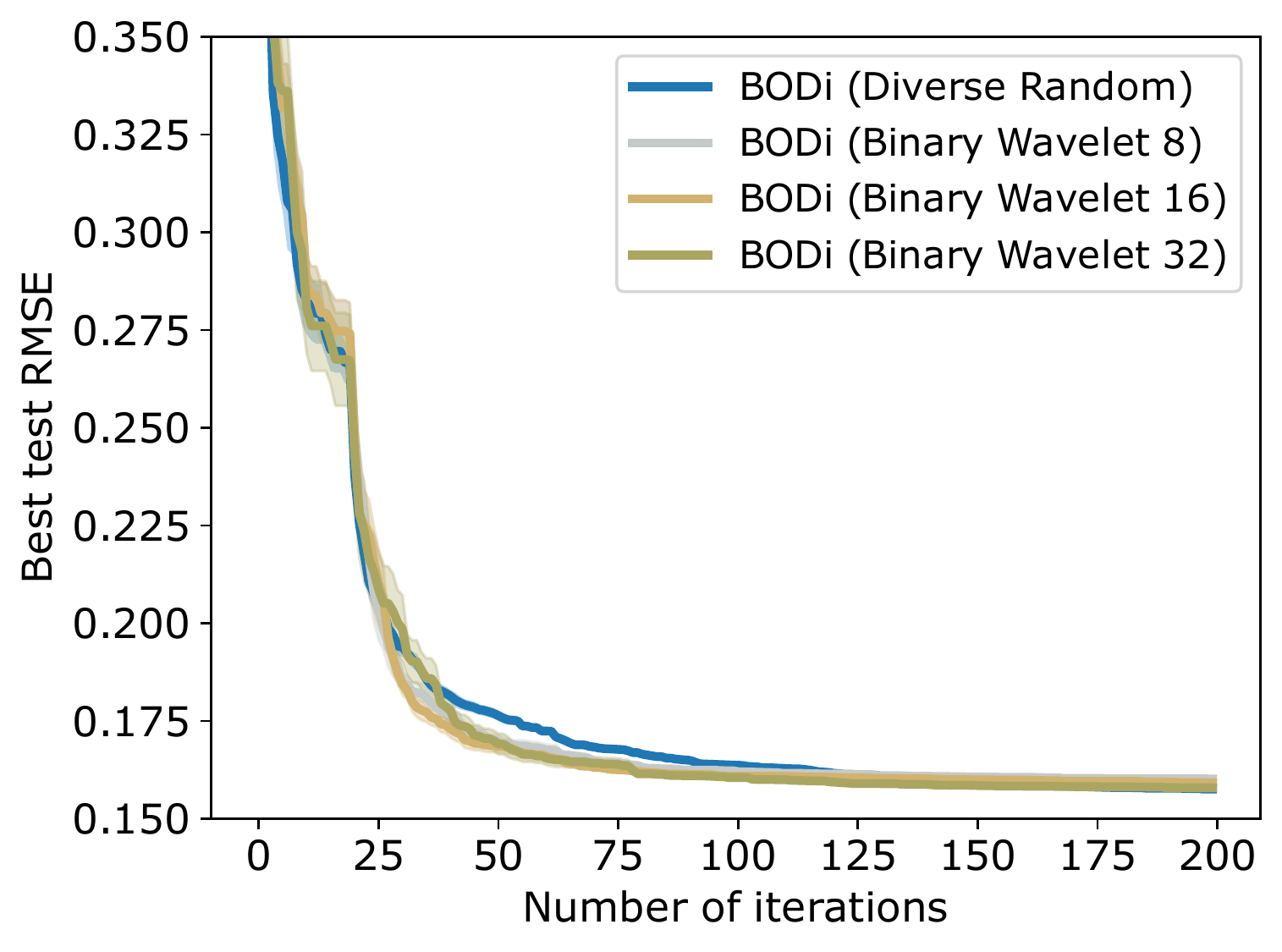}
    \label{fig:binwavelet_svm}
    }\quad
    \caption{Results comparing the two dictionary construction choices for \ourmethod{} (i.e., diverse random and binary wavelet). Overall, we find that binary wavelet design performs reasonably well but diverse random is a more robust choice considering all the benchmarks. Moreover, the diverse random choice can also be employed for categorical parameters unlike the binary wavelet construction which is limited to binary parameters.}
    \label{fig:binwavelet_experiments}
\end{figure*}

\newpage

\begin{algorithm}[!ht]
    \caption{\textsc{Binary Wavelet} ($n$) Transform}
    \textbf{requires}: input dimension $n$
    \begin{algorithmic}[1]
    \STATE if $n$ == $2$: \textbf{return} $\begin{bmatrix}
    1 & 1 \\
    1 & 0
    \end{bmatrix}$
    \STATE if $n$ == $4$: \textbf{return} $\begin{bmatrix}
    1 & 1 & 1 & 1\\
    1 & 0 & 0 & 0 \\
    1 & 0 & 1 & 1 \\
    1 & 0 & 1 & 0
    \end{bmatrix}$
    \STATE $B_{n-4}$= \textsc{Binary Wavelet} ($n$-4)
    \STATE Compute upper left $n-2 \times n-2$ matrix $\Gamma$
    \STATEx \hskip 1.0em $\Gamma = \begin{bmatrix}
        \mathbf{1}_{[2, 2]} & \mathbf{1}_{[2, n-4]} \\
        \mathbf{1}_{[n-4, 2]} & \neg B_{n-4} \\
    \end{bmatrix}$
    \STATE Set lower left block $\Delta^T \leftarrow \begin{bmatrix}1 & 0 & 1 & \cdots  \\
        1 & 0 & 1 & \cdots \\ \end{bmatrix}$
    \STATE Set lower right block $\Lambda \leftarrow \begin{bmatrix}1 & 1   \\
        1 & 0  \\ \end{bmatrix} $
    \STATE \textbf{return} $B_n = \begin{bmatrix}
    \Gamma & \Delta \\
    \Delta^T & \Lambda \\
    \end{bmatrix}$
    \end{algorithmic}
    {\label{alg:bin_wavelet}}
\end{algorithm}

\section{Local Search for Optimizing Acquisition Function over Combinatorial Spaces}
\label{ref:local-search-appendix}
In each iteration of optimizing the acquisition function, we first generate a set of initial inputs as starting points for local search over combinatorial inputs. These initial inputs are constructed by picking top-ranked candidates from a combined set of uniformly generated random inputs and spray inputs (Hamming distance based neighbors of incumbent best uncovered inputs of BO run). From each starting input, we run a greedy hill-climbing search where we move to the one-Hamming distance neighbor with the highest acquisition function value till convergence of the search or a maximum of $n_{ls}$ iterations. The best candidate among all the local search trajectories is picked as the next input for evaluation.

\FloatBarrier
\section{Runtimes}
Fig.~\ref{fig:avg_runtime} shows a runtime comparison of \ourmethod{}, \combo{}, and \casmo{}.
We show the average time to both fit the model and generate a new candidate on the MaxSAT problem with $60$ binary parameters.
\ourmethod{} uses the default of $128$ anchors which is used in all experiments.
We observe that \ourmethod{} and \casmo{} are significantly faster than \combo{} and on average take less than $10$ seconds per BO iteration.

\begin{figure*}[!ht]
    \centering
    \subfloat{
    \includegraphics[width=0.4\textwidth]{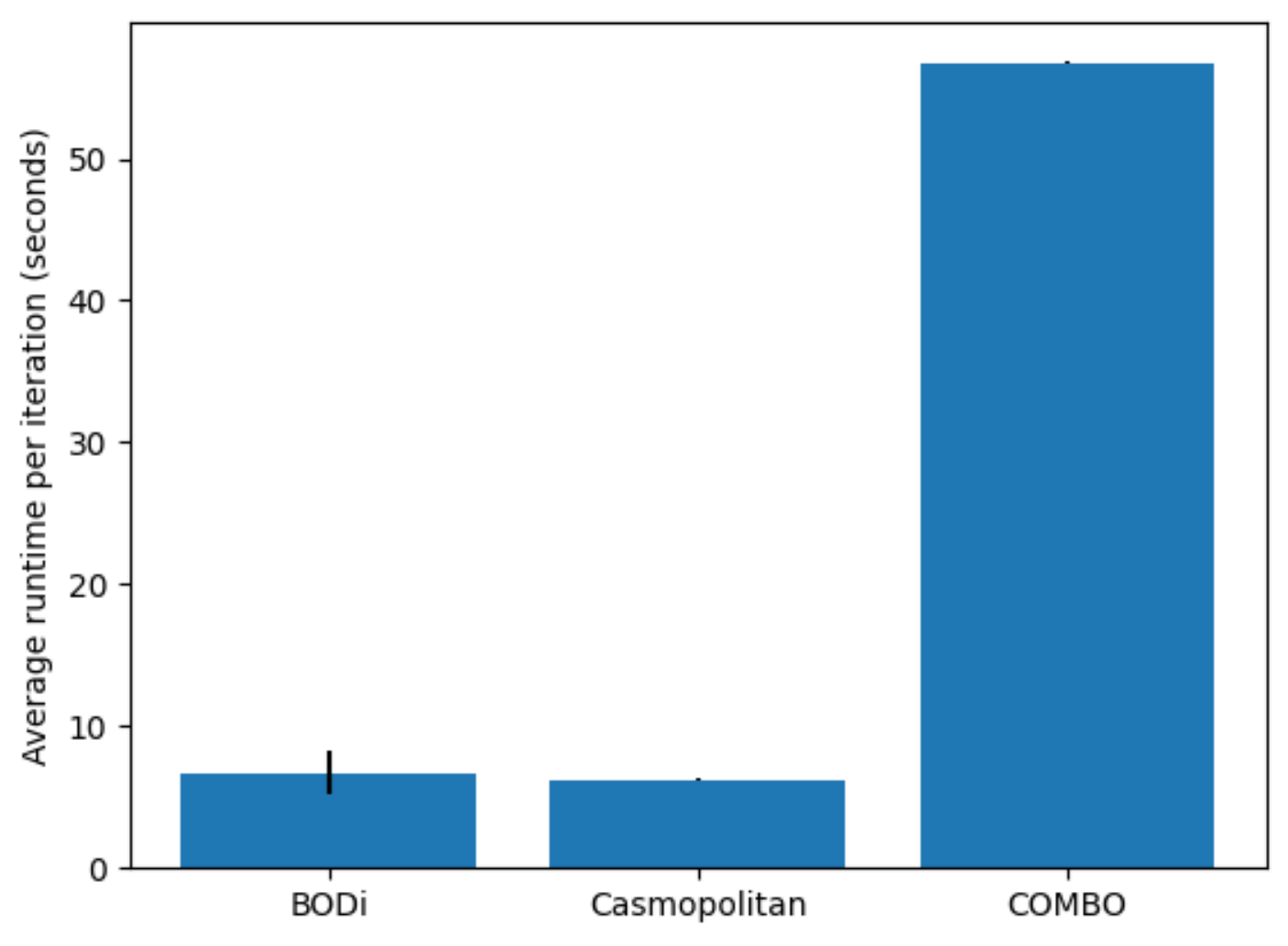}

    }
    \caption{
        Average runtime per iteration (in seconds) comparing \ourmethod{} with \casmo{} and \combo{}.
    }
    \label{fig:avg_runtime}
\end{figure*}

\section{Societal Impact}
Bayesian optimization is a commonly used approach for black-box optimization across broad variety of applications, including e.g. automated machine learning (AutoML). The primary benefit of our proposed \ourmethod{} method is better optimization performance for Bayesian optimization over combinatorial and mixed search spaces -- in the context of AutoML this would mean finding better models or finding similarly good models while using much less computational resources. We believe that such improvements pose minimal risk beyond more general concerns about potential misuse of the underlying application.

\section{Dictionary construction for dictionaries with binary and categorical variables \label{sec:categoricaldictionary}}

Algorithm \ref{alg:ps_dict2} provides pseudo-code for constructing dictionaries defined over binary input spaces $\{0, 1\}^d$. The key idea is to diversify the constructed dictionary by generating binary vectors determined by different bias parameters ($\theta$) of the Bernoulli distribution (unlike the naive random where $\theta$ is always $1/2$). This algorithm is generalized to the varying-sized categorical inputs in the following way (Algorithm \ref{alg:ps_dictcat}): for each of the $m$ elements of the dictionary $\dict$, we first sample a weight vector $\theta$ from the $\tau_{max}$-simplex $\Delta^{\tau_{max}}$, where $\tau_{max} = \max_j \tau_j$.  For each variable $v_j$, we then sample $\tau_j$ elements from $\theta$ and use those as the weight vector of a categorical distribution from which we in turn draw the $j$-th dimension of the dictionary element.

\begin{algorithm}[!ht]
    \caption{Dictionary design for binary input space  $\{0,1\}^d$ with diversely sparse rows
    }
    \textbf{requires}: dictionary size $m$
    \begin{algorithmic}[1]
        \STATE Dictionary $\dict \leftarrow $ empty
        \FOR{$i$=$1, 2, \ldots, m$}
            \STATE $\atom_i \leftarrow $ empty
            \STATE Sample Bernoulli parameter $\theta \sim \text{Uniform}(0, 1)$
            \FOR{$j$=$1, 2, \ldots, d$}
                \STATE Sample binary number $a \sim \text{Bernoulli}(\theta)$
                \STATE $\atom_i \leftarrow \atom_i \cup a$
            \ENDFOR
            \STATE Add $\atom_i$ to dictionary: $\dict \leftarrow \dict \cup \atom_i $
        \ENDFOR
        \STATE \textbf{return} the dictionary $\dict$ of size $m \times d$
    \end{algorithmic}
    {\label{alg:ps_dict2}}
\end{algorithm}

\begin{algorithm}[!ht]
    \caption{Dictionary design for discrete spaces with categorical variables via diverse parameters}
    \textbf{Input}: candidate sets $C(v_1),\dotsc, C(v_d)$, dictionary size $m$
    \textbf{Output}: the dictionary $\dict$ of size $m \times d$
    \begin{algorithmic}[1]
        \STATE Dictionary $\dict \leftarrow $ empty
        \STATE $\tau_{max} \leftarrow \max_j \tau_j$
        \FOR{$i$=$1, 2, \ldots, m$}
            \STATE $\atom_i \leftarrow $ empty
            \STATE Sample $\btheta \sim \Delta^{\tau_{max}}$
            \FOR{$j$=$1, 2, \ldots, d$}
                \STATE $\btheta_j \leftarrow $ sample (w/o repl.) $\tau_j$ elements from $\btheta$
                \STATE $\btheta_j \xleftarrow{} \btheta_j / \|\btheta_j\|_1$ (Normalize to yield distribution)
                \STATE $a \leftarrow $ sample from $C(v_j)$ with probabilities $\btheta_j$
                \STATE $\atom_i \leftarrow \atom_i \cup a$
            \ENDFOR
            \STATE Add $\atom_i$ to dictionary: $\dict \leftarrow \dict \cup \atom_i $
        \ENDFOR
    \end{algorithmic}
    {\label{alg:ps_dictcat}}
\end{algorithm}

{\noindent \bf Illustration of Algorithm \ref{alg:ps_dictcat}}

We illustrate the description in Algorithm \ref{alg:ps_dictcat} with a simple example for an input with the same number of candidate choices for each input dimension. Let's say, we want to construct a dictionary vector for  an input space with 10 variables (i.e. 10-dimensional input) where each dimension can take 4 values (i.e., $\tau_1 = \tau_2 …. = \tau_d = 4$). For each input dimension (for loop in line 6), we sample a value from a categorical distribution which is parameterized by weights $\theta_j$. In the naive random case, $\theta_j$ is [¼, ¼, ¼, ¼ ]. In contrast, algorithm \ref{alg:ps_dictcat} diversifies this vector $\theta_j$ by sampling from a simplex in line 5. The  variable $\tau_{max}$ (Line 2) and resampling of $\theta$ in Line 7 allows us to generalize the algorithm for the case where each input dimension can take a different number of values.

\thispagestyle{empty}

\end{document}